\newtheorem{proposition}{Proposition}
\DeclareMathOperator*{\argmin}{argmin}
\DeclareMathOperator{\tr}{tr}
\journal{Transportation Research Part C: Emerging Technologies}
\begin{document}
\begin{frontmatter}
\title{Turning Traffic Monitoring Cameras into Intelligent Sensors for Traffic Density Estimation}
\author[address1]{Zijian Hu}
\ead{zijian.hu@connect.polyu.hk}
\author[address1]{William H.K. Lam}
\ead{william.lam@polyu.edu.hk}
\author[address2]{S.C. Wong}
\ead{hhecwsc@hku.hk}
\author[address3]{Andy H.F. Chow}
\ead{andychow@cityu.edu.hk}
\author[address1,address4]{Wei Ma\corref{mycorrespondingauthor}}
\cortext[mycorrespondingauthor]{Corresponding author}
\ead{wei.w.ma@polyu.edu.hk}
\address[address1]{Civil and Environmental Engineering, The Hong Kong Polytechnic University \\ Hung Hom, Kowloon, Hong Kong SAR, China}
\address[address2]{Department of Civil Engineering, The University of Hong Kong \\ Pokfulam, Hong Kong Island, Hong Kong SAR, China}
\address[address3]{Department of Advanced Design and Systems Engineering, City University of Hong Kong \\ Kowloon Tong, Kowloon, Hong Kong SAR, China}
\address[address4]{Research Institute for Sustainable Urban Development, The Hong Kong Polytechnic University \\ Hung Hom, Kowloon, Hong Kong SAR, China}

\begin{abstract}
Accurate traffic state information plays a pivotal role in the Intelligent Transportation Systems (ITS), and it is an essential input to various smart mobility applications such as signal coordination and traffic flow prediction. The current practice to obtain the traffic state information is through specialized sensors such as loop detectors and speed cameras. In most metropolitan areas, traffic monitoring cameras have been installed to monitor the traffic conditions on arterial roads and expressways, and the collected videos or images are mainly used for visual inspection by traffic engineers. Unfortunately, the data collected from traffic monitoring cameras are affected by the {\bf4L} characteristics: {\bf L}ow frame rate, {\bf L}ow resolution, {\bf L}ack of annotated data, and {\bf L}ocated in complex road environments. Therefore, despite the great potentials of the traffic monitoring cameras, the 4L characteristics hinder them from providing useful traffic state information ({\em e.g.}, speed, flow, density). This paper focuses on the traffic density estimation problem as it is widely applicable to various traffic surveillance systems. To the best of our knowledge, there is a lack of the holistic framework for addressing the 4L characteristics and extracting the traffic density information from traffic monitoring camera data. In view of this, this paper proposes a framework for estimating traffic density using uncalibrated traffic monitoring cameras with 4L characteristics. The proposed framework consists of two major components: camera calibration and vehicle detection. The camera calibration method estimates the actual length between pixels in the images and videos, and the vehicle counts are extracted from the deep-learning-based vehicle detection method. Combining the two components, high-granular traffic density can be estimated. To validate the proposed framework, two case studies were conducted in Hong Kong and Sacramento. The results show that the Mean Absolute Error (MAE) in camera calibration is less than 0.2 meters out of 6 meters, and the accuracy of vehicle detection under various conditions is approximately 90\%. Overall, the MAE for the estimated density is 9.04 veh/km/lane in Hong Kong and 1.30 veh/km/lane in Sacramento. The research outcomes can be used to calibrate the speed-density fundamental diagrams, and the proposed framework can provide accurate and real-time traffic information without installing additional sensors.
\end{abstract}
\end{frontmatter}

\section{Introduction}
Accurate real-time and traffic state information is the essential input to the Intelligent Transportation Systems (ITS) with various traffic operation and management tasks, such as ramp metering \citep{alinea, au_alinea, pi_alinea}, perimeter control \citep{gating, pp1}, congestion pricing \citep{cp1, cp2, cp3}, {\em etc}. In recent years, many cities have expended considerable efforts on installing traffic detectors to obtain traffic information. However many ITS applications are still data-hungry. Using Hong Kong as an example, the current detectors ({\em e.g.,} loop detectors) only cover approximately 10\% of the road segments, which is not sufficient to support the network-wide traffic modeling and management framework. How to collect the real-time traffic state information in an accurate, efficient, and cost-effective manner presents a long-standing challenge for not only the research community but also the private sector ({\em e.g.,} Google Maps) and the public agency ({\em e.g.,} Transport Department).

Traffic state information can be categorized into speed, density, and flow data, each of which requires specialized traffic sensors. In general, speed estimation is relatively straightforward. For example, GPS devices can be installed on private or public vehicles (such device-equipped vehicles are known as probe vehicles, PVs) \citep{trafficspeedpv}, and speed information can be estimated accurately  with low PV penetration rates of 4\%-5\%  \citep{traffic_speed_penerating}. PV-based speed estimation has already been applied to real-world such as Google Maps, Uber, {\em etc}. However, traffic flow and density are more challenging to be estimated, which requires a full-penetration observation of a road segment. Given the traffic speed, density and flow follow the ``one computes the other'' characteristics based on the speed-flow-density relationship \citep{NI201651}. 
In this paper, we focus on estimating traffic density on road segments as it is applicable to various traffic surveillance systems around the globe. The proposed framework could potentially be extended to traffic flow estimation using video data which we leave for future research.

Recent years have witnessed great advances in emerging technologies for traffic sensing, and various sensors and devices can be employed to estimate the traffic density on urban roads. A review of existing studies on traffic density estimation is shown in Table~\ref{tab:review_estimation}. 

\begin{table}[h]
    \begin{tabular}{p{0.23\columnwidth}|p{0.35\columnwidth}|p{0.35\columnwidth}}
    \hline 
    \textbf{Sensors} & \textbf{Advantages} & \textbf{Disadvantages} \\ \hline \hline
    {\bf Point sensors}  \citet{NI20163} & 1. Steady data sources for 24/7 monitoring. & 1. Expensive and difficult for massive installation and  maintenance.  \\ \hline
    {\bf VANET}  & 1. No additional hardware required. & 1.Limited accuracy when the penetration rate of PVs is low. \\ \citet{Panichpapiboon08} & 2. Potential data sources covered a large-scale traffic network. & 2. Rare pilot study has been conducted. \\ \hline 
    {\bf UAV} & 1. High flexibility and instant deployment. & 1. Challenging to long-time estimation with large perspective. \\ \citet{Zhu2018, Ke2019} & 2. High fidelity data sources. & 2. Expensive for massive deployment. \\ \hline
    {\bf Traffic monitoring cameras} & 1. Widespread in many cities. & 1. Low data quality leading to potentially inaccurate results. \\ This paper & 2. Steady data sources for 24/7 monitoring. & 2. Owing to privacy concerns, sometimes only images (and not videos) can be acquired. \\ \hline
    \end{tabular}
    \caption{A review of emerging sensors for estimating traffic density.}
    \label{tab:review_estimation}
\end{table}

Point sensors (\textit{e.g.}, inductive-loop detectors, pneumatic tubes, radio-frequency identification (RFID), {\em etc.}) are widely used for traffic density estimation \citep{NI20163}, and they are robust to environment changes (\textit{e.g.,} weather, light) for stable 24/7 estimation. For example, California uses the Caltrans Performance Measurement System (PeMS), consisting of more than 23,000 loop detectors to monitor the traffic state on trunks and arterial roads \citep{PeMS}. However, the expense of deployment and maintenance of point sensors are considerable \citep{loop_cost}. Some advanced techniques such as Vehicular Ad hoc Network (VANET) \citep{Panichpapiboon08} and Unmanned Aerial Vehicle (UAV), can complement point sensors and contribute to traffic density estimation \citep{Zhu2018, Ke2019}. 

Traffic monitoring cameras are an essential part of an urban traffic surveillance system. Cameras are often used for visual inspection of traffic conditions and detection of traffic accidents by traffic engineers sitting in the Traffic Management Centers (TMC). Such cameras are widely distributed in most metropolises, making it possible for large-scale traffic density estimation. Figure~\ref{fig:teaser} provides the composition and snapshots of traffic monitoring cameras in different cities. 
\begin{figure}[h]
    \centering 
    \includegraphics[width=0.95\textwidth]{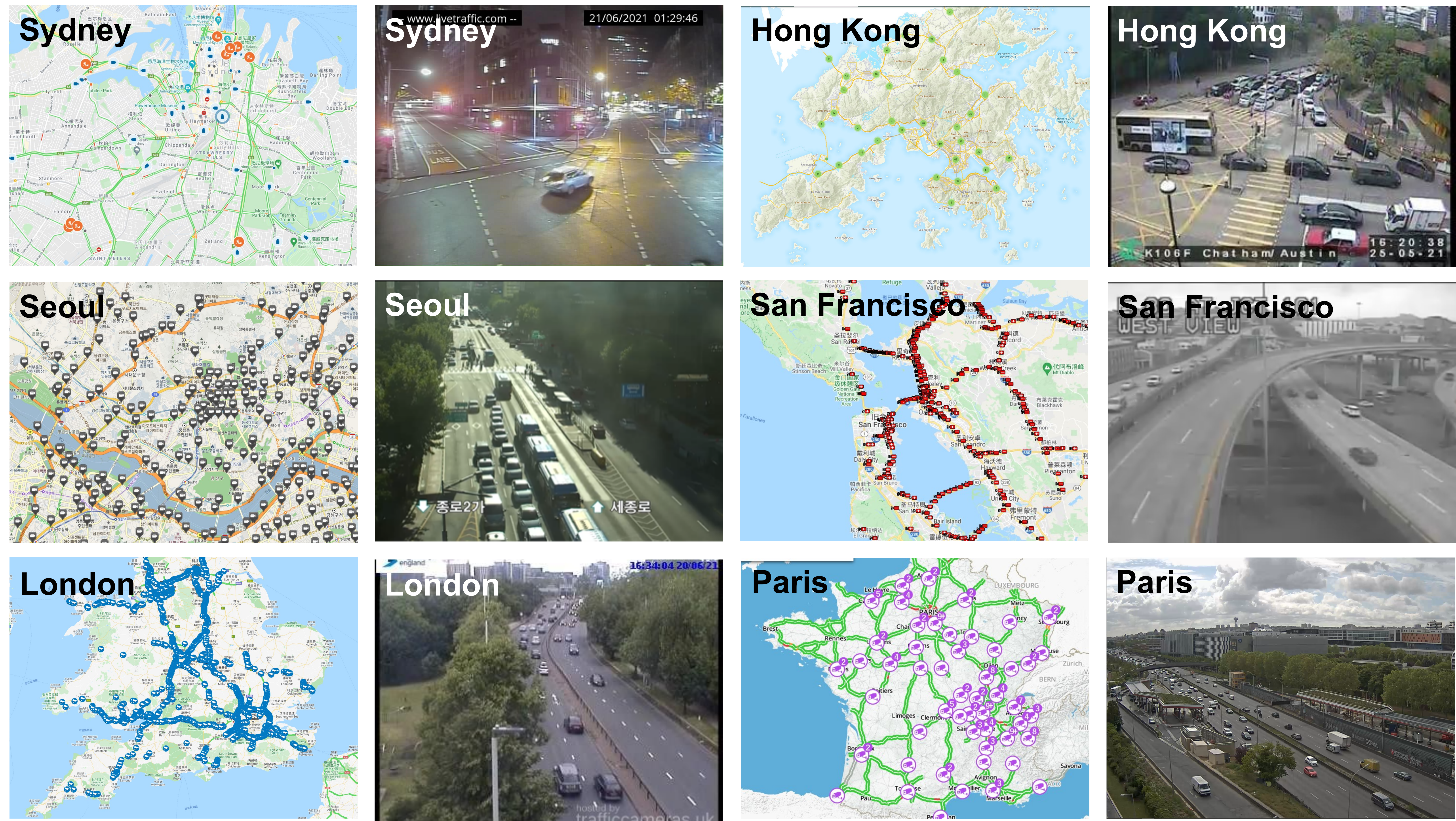}
    \caption{Traffic monitoring cameras in different cities. (Columns \#1 and \#3: the number and locations of cameras in various traffic surveillance systems; Columns \#2 and \#4: snapshots from monitoring cameras.)}
    \label{fig:teaser} 
\end{figure}
For example, in California, approximately 1,300 cameras are set up by Caltrans to monitor the traffic conditions on highways \citep{gerfen2009caltrans}; in Seoul, the TOPIS\footnote{Seoul Transport Operation \& Information Service} system functions on 834 monitoring cameras; and in Hong Kong, the Transport Department uses about 400 monitoring cameras in its eMobility  System\footnote{\url{https://www.hkemobility.gov.hk/en/traffic-information/live/cctv}}. With various camera-based traffic surveillance system deployed globally, there is a great potential to extract traffic information from camera images and videos. Combined with recent advanced technologies, several attempts have been made to vehicle information extraction (\textit{speed and count}) \citep{todd2003camera_speed, wan2014camera,  vehicle_info_extraction}, vehicle re-identification \citep{vrid1, vrid2} and pedestrian detection \citep{pd1, pd2}. Furthermore, it is in great need to make use of the massive traffic monitoring camera data for traffic density estimation.

For different traffic monitoring cameras, the collected data can be low-resolution videos ({\em e.g.}, Seoul TOPIS system) or images ({\em e.g.}, Hong Kong eMobility system). Video data can be used to estimate traffic speed, flow and density, whereas image data can only be used for density estimation, owing to the typically low frame rate of images ({\em e.g.}, one image every two minutes in Hong Kong). To develop a framework that can be applicable to different systems in the world, this paper focuses on the traffic density estimation problem. We also note that the proposed framework can be further extended to make use of the video data for speed and flow estimation. 

To look into the density estimation problem, we note that the traffic density $k$ is computed as the number of vehicles $N$ per lane divided by the length of a road $L$ \citep{DARWISH2015337}, as presented in Equation~\ref{eq:raw_traffic_density}.
\begin{equation}
    k = \frac{N}{L}
    \label{eq:raw_traffic_density}
\end{equation}
Therefore, based on Equation~\ref{eq:raw_traffic_density}, the traffic density estimation problem can be decomposed into two sub-problems: 
\begin{itemize}
    \item {\bf Camera calibration:} aims to estimate the road length $L$ from camera images, in which the core problem is to measure the distance between the real-world coordinates corresponding to the image pixels.
    \item {\bf Vehicle detection:} focuses on counting the vehicle number $N$, and it can be formulated as the object detection problem. 
\end{itemize}
Both problems are separately discussed in the research field of Computer Vision (CV) \citep{Zhang2000checkboard, fast_rcnn}. However, the challenges of traffic density estimation from  monitoring cameras are unique. 

The data collected from traffic monitoring cameras appeals to the {\bf 4L} characteristics. Firstly, due to personal privacy concerns and network bandwidth limits, the camera images are usually in {\bf L}ow resolution and {\bf L}ow frame rate. For example, in Hong Kong, the resolution of the monitoring image is $320 \times 240$ pixels, and all images are updated every two minutes \citep{zhang2010cctv}. Secondly, it is onerous to annotate detailed information for each camera, and hence most of the collected data are {\bf L}acking in annotation. Thirdly, monitoring cameras distributed across urban areas are often {\bf L}ocated in complex road environments, where the roads are not simply straight segments ({\em e.g.}, curved roads, mountain roads and intersections). Overall, we summarize the challenges of the traffic density estimation using the monitoring cameras as {\bf 4L}, which represents: {\bf L}ow resolution, {\bf L}ow frame rate, {\bf L}ack of annotated data and {\bf L}ocated in complex road environments. 

The 4L characteristics present great challenges to both camera calibration and vehicle detection problems. There is a lack of holistic frameworks to comprehensively address the 4L characteristics for traffic density estimation using monitoring cameras.  To further highlight the contributions of this paper, we first review the existing literature on both camera calibration and vehicle detection.


{\bf Literature review on camera calibration.} Camera calibration aims to match invariant patterns (\textit{i.e.,} key points) to acquire a quantitative relationship between the points on images and in the real world. Under the 4L characteristics, conventional camera calibration faces multi-fold challenges: 1) The endogenous camera parameters ({\em e.g.,} focal length) can be different for each camera and are generally unknown. 2) Recognizing the brands and models of vehicles from low-resolution images is challenging, making it difficult to correctly match key points based on car model information; 3) Continuous tracking a single vehicle from low frame rate images is impossible, which makes some of the existing algorithms inapplicable. 4) The invariant patterns in images are challenging to locate. This difficulty is caused by both the locations of the monitoring cameras (usually at the top of buildings or bridges to afford a wide visual perspective for visual monitoring traffic conditions) and the low image resolution.
Even a one-pixel shift of the annotation errors (errors when annotating the key points) will result in a deviation of tens of centimeters in the real world. 5) Existing camera calibration algorithms assume straight road segments, but many monitoring cameras locate at more complex road environments (\textit{e.g.,} curved roads, mountain roads, intersections), making the existing algorithms not applicable. 

Existing camera calibration methods only solve a subset of the aforementioned challenges. In the traditional calibration paradigm, a checkboard with a certain grid length is manually placed under the cameras \citep{Zhang2000checkboard}, and key points can be selected as the intersections of the grid.  However, it is time- and labor-consuming to simultaneously calibrate all cameras in the entire surveillance system. In the transportation scenario, the key points can be extended to the corner of standard objects on the road, such as shipping containers \citep{ke2017camera}, but such objects are not always desired in all surveillance cameras. A common method for traffic camera calibration without the need of specialist equipment is to estimate the camera parameters using the vanishing point method, which leverages the perspective effect. The key points can be selected either as road markings \citep{Cao2004vp, Li2007camera} or common patterns on vehicles on roads \citep{Dubska2014Brno, SOCHOR201787Brno}. These works assume that both sides of the road are parallel straight lines or that all vehicles drive in the same direction. However, this assumption is invalid for complex road environments, such as curved roads and intersections, where vehicles drive in multiple directions. Hence, it is difficult to generalize the method to all camera scenarios in different traffic surveillance systems. 
Another alternative method is the Perspective-n-Point (PnP) method, which does not rely on vanishing points, but estimates the camera orientation given $n$ three-dimensional points and their projected two-dimensional points (Normally $n \geq 3$) in the image. Several algorithms have been proposed to solve the PnP problem \citep{Haralick1991p3p, Quan1999pnp, Lepetit2009epnp, Hesch2011dls, Li2012rpnp}, and it has been validated as a feasible and efficient method of traffic camera calibration using monitoring videos \citep{Bhardwaj2018autocalib}. However, the PnP method requires prior knowledge of the camera focal length, which is unknown for many of the monitoring cameras in real-world applications. The PnP method can be further extended to the PnPf method, which considers the focal length as an endogenous variable during the calibration \citep{Penate2013upnp, Zheng2014gpnp, Wu2015p35p, zheng2016dlspnpf}, but it has rarely been successfully applied to traffic monitoring camera in practice. An important reason is that PnPf is normally sensitive to annotation errors which can lead to a completely false solution. Because the images from traffic monitoring cameras are in low-resolution, the PnPf method may not be applicable. Additionally, a recently reported method \citep{Bartl2020Brno} calibrates the camera in complex road environments without knowing the focal length, but it requires that the key points are on a specific vehicle model {{\em e.g.,} Tesla Model S}, which is impractical for low-resolution and low-frame-rate cameras. In summary, existing camera calibration methods may not be suitable under 4L characteristics. The main reason is that the key points on single vehicle cannot provide enough information for the calibration due to the 4L characteristics. In contrast, if multiple key points on multi-vehicles are considered in the camera calibration method, the calibration results could be made more stable and robust. However, this is still an open problem to the research community.

{\bf Literature review on vehicle detection.} For vehicle detection, current solutions leverage machine-learning-based models to detect vehicles from camera images, while many challenges still remain: 1) The machine learning models heavily rely on the annotated images for supervised training, and the labeled images are generally not available for each traffic surveillance system. 2) Vehicles only occupy several pixels in images due to the low resolution of images, making them difficult to be detected by the machine learning models; 3) during nighttime, the lighting conditions may hinder the detection of vehicles, presenting a challenge to 24/7 traffic density estimation.

Vehicle detection from monitoring cameras has been extensively studied for many years. Background subtraction was initially considered as an efficient algorithm to extract vehicles from the background \citep{ozkurt2009automatic, zhang2010cctv, jain2012traffic}. The underlying assumption in background subtraction is that the background of multiple images is static, and can therefore be obtained by averaging multiple images. However, this assumption may be improper when the illumination intensity of different images varies significantly, such as at night or on windy days. 
Recent studies have focused on detection-based algorithms since they are more resistible to the background changes.
General object detection frameworks can be used to detect vehicles from images \citep{fast_rcnn,redmon2016yolo,mask_rcnn,lin2017focal}, while as they are not tailored for vehicle detection, the performance is not satisfactory. In the transportation community, \citet{Bautista2016cnn} applied a convolutional neural network (CNN) for vehicle detection in low resolution traffic videos; \citet{BISWAS2019176} combined two classical detection frameworks for accuracy consideration; and \citet{Yeshwanth2017OF} extended to automatically segment the region of interest (ROI) based on optical flow. Recently, \citet{zhang2017citycam} generated a weighted mask to compensate for size variance caused by the perspective effect. They subsequently combined a CNN with Long-Short-Term Memory (LSTM) to exploit spatial and temporal infromation from videos \citep{zhang2017fcn}. In summary, the performance of existing vehicle detection models is degraded drastically when annotated data are lacking. In particular, the unified performances under different camera lighting conditions cannot be guaranteed. It is potentially possible to adopt transfer learning to fuse multiple data from different traffic scenarios, while the related study is still lacking in transportation community.

Overall, the challenges to traffic density estimation under 4L characteristics are summarized in Figure~\ref{fig:literature_review}.
\begin{figure}[th]
    \centering
    \includegraphics[width=1\textwidth]{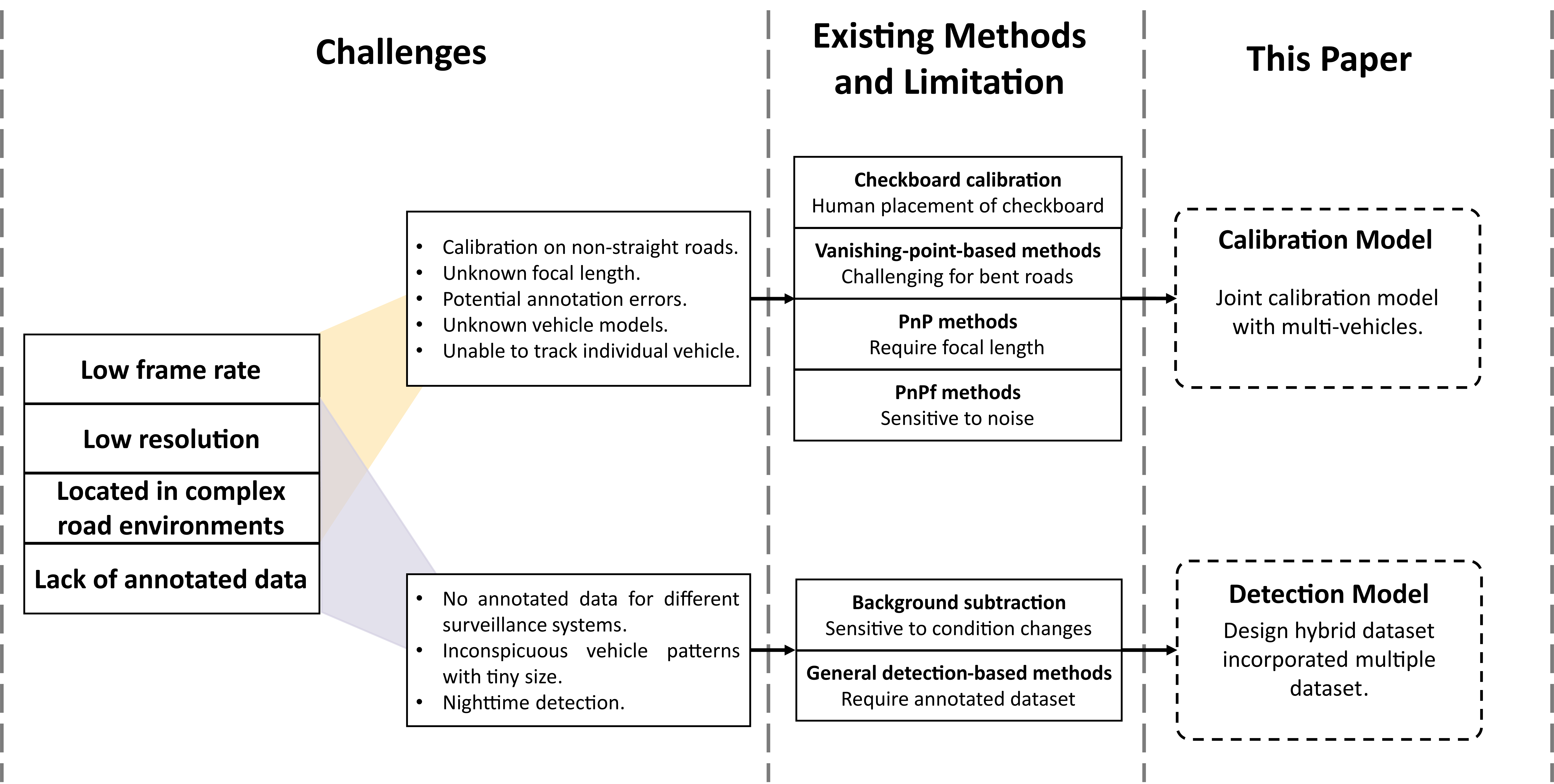}
    \caption{Framework of traffic density estimation with monitoring cameras.}
    \label{fig:literature_review}
\end{figure}
For road length estimation, we aim to calibrate the monitoring camera with unknown focal length using low-quality image slice obtained under complex conditions. For vehicle number estimation, we focus on developing a training strategy that is robust for low-resolution images acquired in both daytime and at nighttime without annotating extra images. 

This paper proposes a holistic framework that turns traffic monitoring cameras into intelligent sensors for traffic density estimation. The proposed framework mainly consists of two component: 1) camera calibration and 2) vehicle detection. For camera calibration, a novel method of multi-vehicle camera calibration (denoted as  \texttt{MVCalib}) is developed to utilize the key point information of multiple vehicles simultaneously. The actual road length can be estimated from the pixel distance in images once the camera is calibrated.  For vehicle detection, we develop a linear-program-based approach to hybridize various public vehicle dataset to balance the images in daytime and at nighttime under various conditions, and these public datasets are originally used for different purposes. A deep learning network is trained on the hybrid dataset, and it can be used for different vehicle detection task in various monitoring camera system. Two case studies with ground truth have been conducted to evaluate the performance of the proposed framework. Results show that the estimation accuracy for the road length is more than 95\%. The vehicle detection can reach an accuracy of 88\% in both daytime and at nighttime, under low-quality camera images. It is demonstrated that this framework can be applied to real time traffic density estimation using monitoring cameras in different countries.

To summarize, the major contributions of this paper are as follows:
\begin{itemize}   
    \item It provides a holistic framework for 24/7 traffic density estimation using traffic monitoring cameras with {\bf 4L} characteristics: {\bf L}ow frame rate, {\bf L}ow resolution, {\bf L}ack of annotated data, and {\bf L}ocated in complex road environments.
    \item It first time develops a robust multi-vehicle camera calibration method  \texttt{MVCalib} that collectively utilizes the spatial relationships among key points from multiple vehicles.
    \item It systematically designs a linear-program-based data mixing strategy to synergize image datasets from different cameras and to enhance the performance of the deep-learning-based vehicle detection models. 
    \item It validates the proposed framework in two traffic monitoring camera systems in Hong Kong and California, and the research outcomes create portals for rapid and massive deployment of the proposed framework in different cities.
\end{itemize}

The rest of this paper is organized as follows. Section \ref{section:methods} presents the proposed methods for camera calibration and vehicle detection separately. Section \ref{section:evaluation} focuses on experiments and evaluations of the current framework. In section \ref{section:case_study_1}, a case study at the footbridge of the Hong Kong Polytechnic University (PolyU) is presented and a case study with the Caltrans system follows in section \ref{section:case_study_2}. Finally, conclusions are drawn in section \ref{section:conclusion}. 

\section{Methods}
\label{section:methods}
In this section, we first introduce the overall framework, and the camera calibration model and vehicle detection model are then elaborated separately. All notation used in this paper is summarized in \ref{apx:notations}.

\subsection{The Overall Framework}
The framework of the traffic density estimation model is shown in Figure~\ref{fig:framework}.
\begin{figure}[h]
    \centering
    \includegraphics[width=1\textwidth]{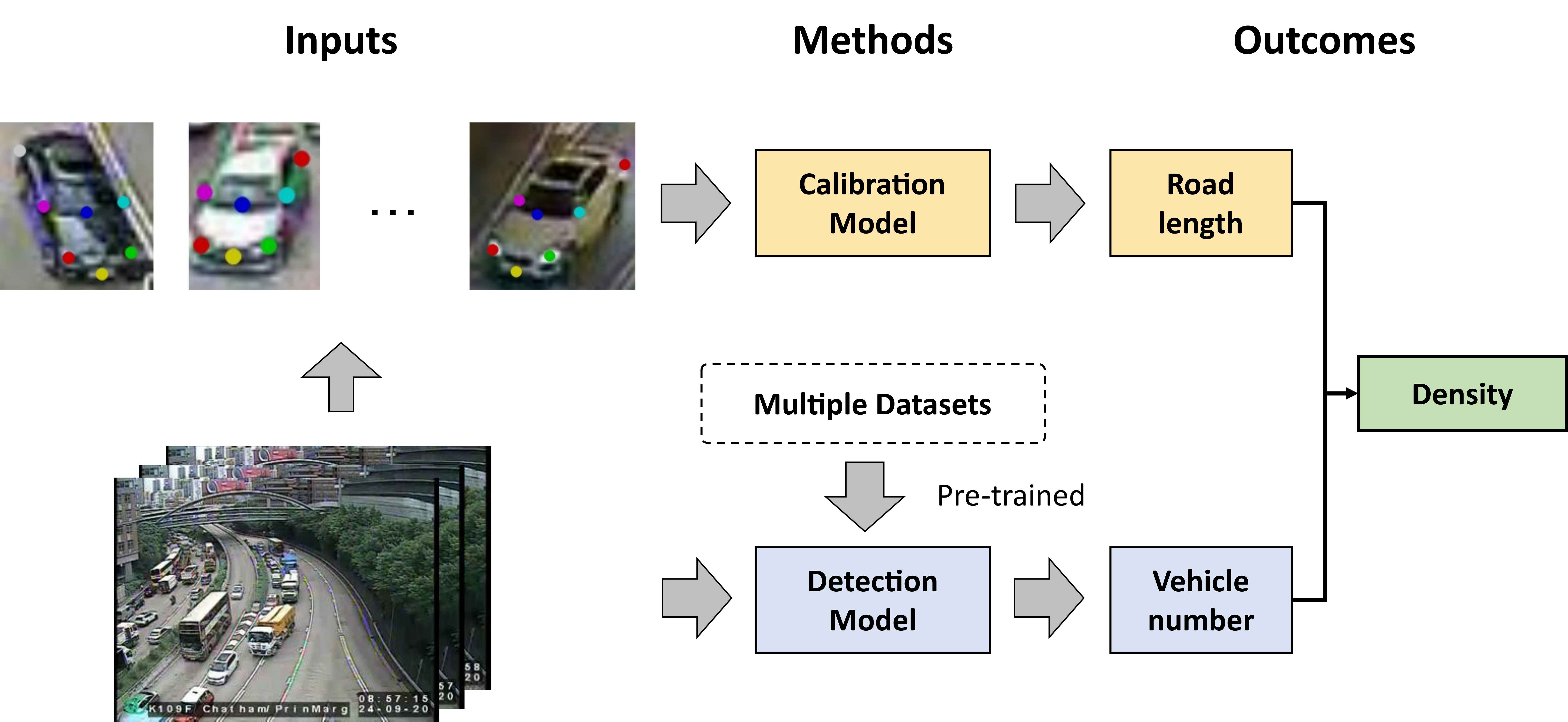}
    \caption{Framework of traffic density estimation with monitoring cameras.}
    \label{fig:framework}
\end{figure}
Camera images are first collected from public traffic monitoring camera systems, and then key points on vehicles are annotated. The camera calibration model uses the annotated data to  derive a relationship between points on images and in the real world. If we can acquire the skeleton of the road, the road length can be further computed after calibration. For vehicle detection, the camera image data are fed to a deep-learning-based vehicle detection model pre-trained on a hybridized dataset, which is used to count the vehicle on the road. Combining the road length and vehicle number information, we can estimate the high-granular traffic density information on the road. We note that the proposed framework can be run in real-time, and hence the output of the framework has great potential in supporting real-time traffic operations and management applications.

\subsection{Camera Calibration}
In this section, we present the proposed camera calibration method, \texttt{MVCalib}. The background about camera calibration is first reviewed, then the detailed information about the proposed camera calibration model will be elaborated subsequently.

\subsubsection{Overview of camera calibration problems}
A simplified pinhole camera model is widely used to illustrate the relationship between three-dimensional objects in the real world and the projected two-dimensional points on the camera images. Given the location of a certain point in the real world $\left[ X, Y, Z \right]^T \in \mathbb{R}^3$, the projected point on the camera image can be represented as $\left[ u, v \right]^T \in \mathbb{R}^2$. The relationship between $\left[ X, Y, Z \right]^T$ and $\left[ u, v \right]^T$ is defined in Equation \ref{eq:full_PnP_eq} and \ref{eq:abbr_PnP_eq}.

\begin{equation}
    s \begin{bmatrix} u \\ v \\  1 \end{bmatrix} = 
    \begin{bmatrix} f & 0 & \frac{w}{2} \\ 0 & f & \frac{h}{2} \\ 0 & 0 & 1 \end{bmatrix}
    \begin{bmatrix} 
        r_{11} & r_{12} & r_{13} & t_1\\ 
        r_{21} & r_{22} & r_{23} & t_2\\ 
        r_{31} & r_{32} & r_{33} & t_3\\ 
    \end{bmatrix}
    \begin{bmatrix} X \\ Y \\ Z \\ 1 \end{bmatrix}
    \label{eq:full_PnP_eq}
\end{equation}
\begin{equation}
    sp_i = K\left[ R \vert T \right]P_i
    \label{eq:abbr_PnP_eq}
\end{equation}
where Equation \ref{eq:abbr_PnP_eq} is the vectorized version of Equation \ref{eq:full_PnP_eq}. $K =  \begin{bmatrix} f & 0 & \frac{w}{2} \\ 0 & f & \frac{h}{2} \\ 0 & 0 & 1 \end{bmatrix} $ encodes the endogenous camera parameters, where $f$ denotes the focal length of the camera. $w$ and $h$ represent the width and height of images. $R =  \begin{bmatrix} 
        r_{11} & r_{12} & r_{13}\\ 
        r_{21} & r_{22} & r_{23}\\ 
        r_{31} & r_{32} & r_{33}\\ 
    \end{bmatrix}$ and $T = \begin{bmatrix} 
        t_1\\ 
        t_2\\ 
        t_3\\ 
    \end{bmatrix}$ are the rotation matrix and translation vector of the camera, respectively. Hence, $[f, R, T] \in \mathbb{R}^{13}$ are the $13$ parameters to be estimated in the problem of camera calibration. Once the camera parameters $f, R$, and $T$ are calibrated, the location of projection points on an image can be deduced from the coordinates in the real-world system

The key points on vehicles in two-dimensional images and the three-dimensional real world are typically common features such as headlights, taillights, license plates, {\em etc.} Existing camera calibration methods assume that the key points of a specific vehicle model ({\em e.g.,} Tesla Model S, Toyota Corolla) are known. Under the 4L characteristics, camera images are too blurry for us to distinguish vehicle models. Hence, in the proposed method, a set of model candidates is built to serve as the references of three-dimensional points. The dataset of two-dimensional and three-dimensional key points for the $i$th vehicle in images and in the real world can be represented as

\begin{equation}
    \begin{aligned}
        \mathcal{p}^{i} &= \left \{ p_1^{i}, p_2^{i}, \dots,  p_k^i, \dots, p_{M_{i}}^{i} \right \}, \quad i \leq n \\
    \mathcal{P}^{i}_j &= \left \{ P_{j, 1}^i, P_{j, 2}^i, \cdots, P_{j, k}^i, \cdots,  P_{j, M_i}^i  \right \}, \quad i \leq n, j \leq m
    \end{aligned}
    \label{eq:vehicle_index_def}
\end{equation}

\noindent where $i$ represents the vehicle index in images and $j$ represents the index of vehicle models. $\mathcal{p}^{i}$ represents the set of two-dimensional key points of the $i$th vehicle on camera images, and $\mathcal{P}_j^{i}$ denotes the sets of three-dimensional key points of the $i$th vehicle in real world assuming the vehicle model is $j$. $n$ and $m$ represent the number of vehicles and the number of vehicle models in the real world, respectively. $M_i$ denotes the number of key points on the $i$th vehicle. More specifically, $p_k^i$ represents the location of the $k$th key point on vehicle $i$ in the image, and $P_{j, k}^i$ represents the three-dimensional coordinates of the $k$th key point on vehicle $i$ assuming that the vehicle model is $j$.

\subsubsection{$\texttt{MVCalib}$}
In this section, we present the proposed multi-vehicle camera calibration method \texttt{MVCalib}. The pipeline of $\texttt{MVCalib}$ is shown in Figure~\ref{fig:framework_calib}. 
\begin{figure}[h]
    \includegraphics[width=1\textwidth]{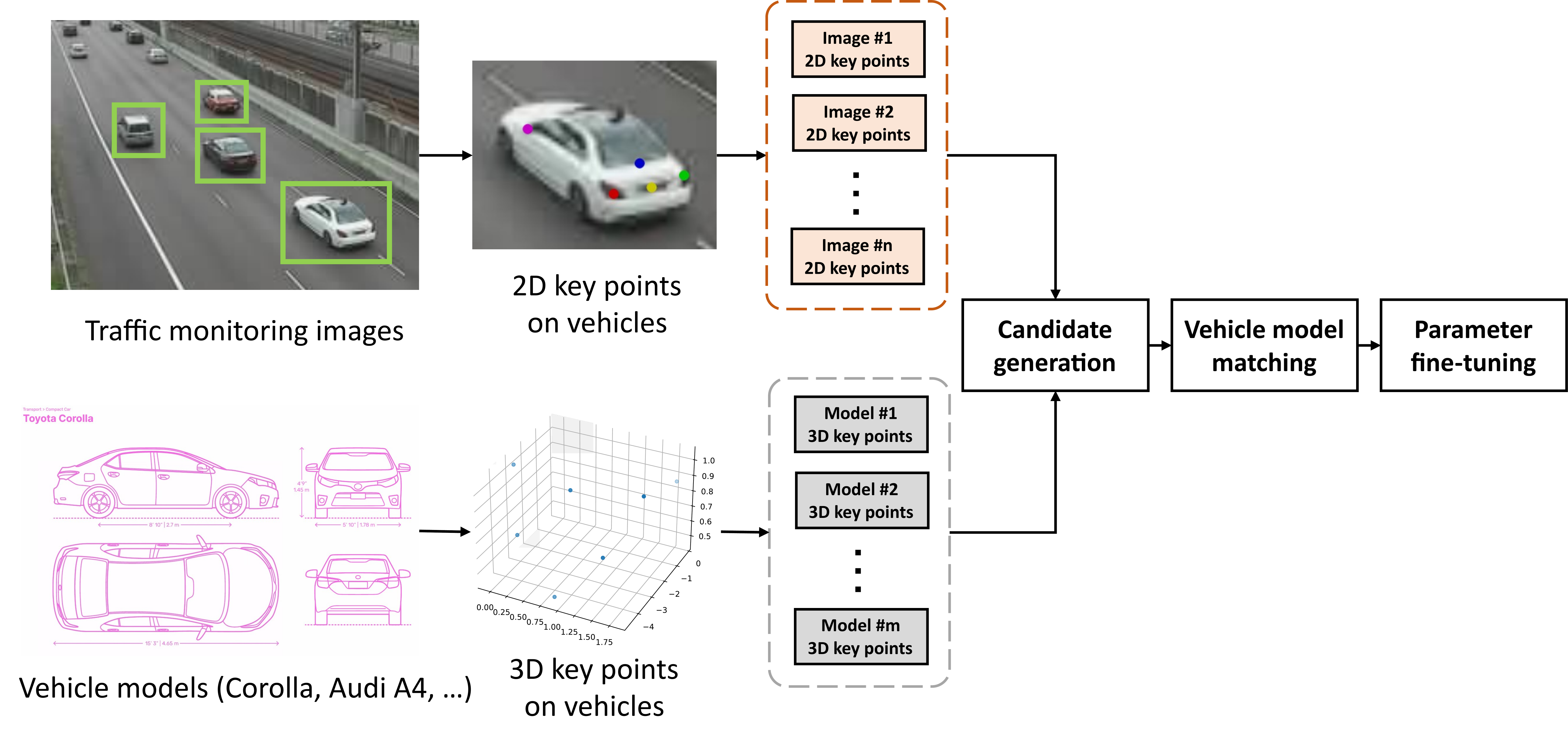}
    \caption{The pipeline of the \texttt{MVCalib} method for camera calibration.}
    \label{fig:framework_calib}
\end{figure}

$\texttt{MVCalib}$ proceeds through three stages: candidate generation, vehicle model matching and parameter fine-tuning. In the candidate generation stage, the solution candidates for each vehicle are generated separately based on conventional camera calibration methods. In the vehicle model matching stage, a specific model is assigned to each vehicle in the camera images. In the parameter fine-tuning stage, joint information on multiple vehicles is utilized to fine-tune the camera parameters. The fine-tuned value of $f, R, T$ will be carried out to estimate the road length for the traffic density estimation.

{\bf Candidate generation.}
In the candidate generation stage, we first apply the conventional camera calibration method to the key points on each vehicle, assuming that its vehicle model and the focal length of the camera are known. Mathematically, for the $i$th vehicle, the coordinates of $M_i$ pairs of key points in two-dimensional space $\mathcal{p}^i$ and in three-dimensional space $\mathcal{P}_{j}^i$ under the $j$th model are known. Given a default value of focal length $\hat{f}$, the parameters of rotation matrix $R$ and translation vector $T$ can be estimated through the Efficient PnP algorithm (EPnP) \citep{Lepetit2009epnp} with a random sample consensus (RANSAC) strategy \citep{Fischler1981RANSAC}.

The EPnP method is applied to all pairs of $(i,j)$, and hence a total number of $m \times n$ times of estimation using EPnP are conducted. The estimated camera parameters (candidates) is denoted as $\tilde{\psi}_j^i = \left\{ \hat{f}, \tilde{R}_j^i, \tilde{T}_j^i \right\}$, which represents the focal length, rotation matrix and translation vector for the $i$th vehicle of the $j$th model.

{\bf Vehicle model matching.}
In the vehicle model matching stage, the most closely matched vehicle model is determined to minimize the projection error from the real world to the image plane for each vehicle $i$. Mathematically, we aim to select the best vehicle model $j$ from $\tilde{\psi}_j^i$ to obtain the camera parameter $\psi^i$ for each vehicle. In the candidate generation stage, the focal length is fixed to a default value, which may contribute to errors in the projection. Therefore, in this stage, we adjust the focal length to a more accurate value and refine the parameter estimation. To this end, we formulate an optimization problem with the objective of minimizing the projection loss from three-dimensional real world to two-dimensional camera images, as presented in Equation~\ref{eq:2d_loss}.

\begin{equation}
\label{eq:2d_loss}
\begin{array}{rrcllll}
\vspace{5pt}
 L_v\left( \psi_j^i \right) = \displaystyle \min_{\psi_j^i} & \multicolumn{4}{l}{\displaystyle  \sum_{k=1}^{M_i} \left\Vert p_k^i - \frac{1}{s_{j, k}^i} K\left( f_j^i \right) \left[ R_j^i \vert T_j^i \right] P_{j, k}^i \right\Vert_2 } \\
\textrm{s.t.} & \psi_j^i &= & \left\{ f_j^i, R_j^i, T_j^i \right\} & \\
~ & s_{j, k}^i&= &R_j^i \Big\vert_3 \cdot P_{j, k}^i + T_j^i \Big\vert_3 \\
~ & f_j^i & \geq & 0, \forall  i \leq i, j \leq m&
\end{array}
\end{equation}

\noindent where $L_v \left(\cdot \right)$ defines the projection loss from the three-dimensional real world to two-dimensional images for the key points on vehicles. $s_{j, k}^i = R_j^i \Big\vert_3 \cdot P_{j, k}^i + T_j^i \Big\vert_3$ is the scale factor for the combination of the $k$th key point on the $i$th vehicle with the $j$th model. $R_j^i \Big\vert_3$ represents the third row of the rotation matrix and
$T_j^i \Big\vert_3$ denotes the third element of the translation vector. The focal length of a camera $f_j^i$ should be greater than $0$.

To solve the optimization problem $L_v(\psi_j^i)$, we employ the Covariance Matrix Adaptation Evolution Strategy (CMA-ES) \citep{Hansen1996CMAES}, which is an evolutionary algorithm for non-linear and non-convex optimization problems, to search for the optimal parameter $\psi_j^i$ for each combination of vehicle and vehicle model. As the performance of the CMA-ES depends on the initial points, we start by searching for the parameters from $\tilde{\psi}_j^i$. For vehicle $i$, we assign the vehicle model with the minimal projection loss $L_v$, as presented in Equation~\ref{eq:proj}.

\begin{equation}
\label{eq:proj}
\psi^i = \left\{ f^i, R^i, T^i \right\} = \argmin L_v \left( \psi_j^i \right), \quad \forall i \leq n
\end{equation}

{\bf Parameter fine-tuning.}
In this stage, we combine the key point information on multiple vehicles and further fine-tune the information to obtain the final estimation of the camera parameters $\psi$. In previous stages, we made use of the key point information on each single vehicle, and applied the estimated camera parameter $\psi^i$ to each vehicle $i$ separately. Ideally, if $\psi^i$ is perfectly estimated, we can project the key points on all vehicles in camera images back to the real world using $\psi^i$, and those key points should exactly match the key points on the vehicle models. Based on this criterion, we can select the camera parameters from $\psi^i$ and further fine-tune them to obtain $\psi$.

To this end, we back-project the two-dimensional points in camera images to the three-dimensional real world by using the parameter $\psi^{i'}$ for vehicle $i'$ as an ``anchor''. Mathematically, given an $i$th vehicle, 
the coordinates of the $k$th key point on the camera image and in the real world can be represented as $p^i_k$ and $P^i_{k}$, respectively. Note that $P^i_{k}$ is a member of $P^i_{j,k}$ as  the vehicle model is fixed in the vehicle model matching stage. To back-project $p^i_k$ to the real-world space using $\psi^{i'}$, we solve a system of equations derived from Equation \ref{eq:full_PnP_eq}, as shown in Equation~\ref{eq:backproj}.

\begin{equation}
\label{eq:backproj}
    \left\{
        \begin{array}{lr}
            \left(
                \tilde{u}_k^i
                \left[ R^{i'}\vert T^{i'} \right] \Big\vert_3 -
                \left[ R^{i'}\vert T^{i'} \right] \Big\vert_1
            \right) \cdot 
            \begin{bmatrix}  \hat{P}_{k}^i(\psi^{i'}) \\ 1 \end{bmatrix} = 0 \\
            \left(
                \tilde{v}_k^i 
                \left[ R^{i'}\vert T^{i'} \right] \Big\vert_3 -
                \left[ R^{i'}\vert T^{i'} \right] \Big\vert_2
            \right) \cdot 
            \begin{bmatrix}  \hat{P}_{k}^i(\psi^{i'}) \\ 1 \end{bmatrix} = 0
        \end{array}
    \right.
\end{equation}

\noindent where $\tilde{u}_k^i = \frac{u_k^i - \frac{w}{2}}{f^{i'}}, \tilde{v}_k^i = \frac{v_k^i - \frac{h}{2}}{f^{i'}}$,  $(u_k^i, v_k^i)$ is the two-dimensional coordinate of the $k$th key point on vehicle $i$ in the camera images, and $\hat{P}_{k}^i(\psi^{i'})$ represents the back-projected point on the $i$th vehicle of the $k$th key point given the camera parameter $\psi^{i'}$ of anchor vehicle $i'$. 

The primary loss between back-projected points and real-world points is defined in Equation~\ref{eq:3d_loss}.

\begin{equation}
\label{eq:3d_loss}
\begin{array}{lllll}
 L_p \left( i, \psi^{i'} \vert \alpha \right) &=& \sum_{k_1 < k_2 < M_i}
    \xi_l \left( i, k_1, k_2, \psi^{i'} \right) + \alpha \xi_r \left( i, k_1, k_2, \psi^{i'} \right) \\
    \xi_l \left( i, k_1, k_2, \psi^{i'} \right) &=& \left| \left\Vert \overrightarrow{\hat{P}_{k_1}^i(\psi^{i'}) \hat{P}_{k_2}^i(\psi^{i'})} \right\Vert_2 - 
    \left\Vert \overrightarrow{P_{k_1}^i P_{k_2}^i} \right\Vert_2 \right| \\
    \xi_r \left( i, k_1, k_2, \psi^{i'} \right) &=&
    \sqrt{1 - \left(
        \frac{
            \overrightarrow{P_{k_1}^i P_{k_2}^i} \cdot
            \overrightarrow{\hat{P}_{k_1}^i(\psi^{i'}) \hat{P}_{k_2}^i(\psi^{i'})}
    }
            {
                \left\Vert \overrightarrow{P_{k_1}^i P_{k_2}^i} \right\Vert_2 \cdot
                \left\Vert \overrightarrow{\hat{P}_{k_1}^i(\psi^{i'}) \hat{P}_{k_2}^i(\psi^{i'})} \right\Vert_2}
    \right)^2}
\end{array}
\end{equation}

\noindent where $\xi_l \left( i, k_1, k_2, \psi^{i'} \right)$ and $\xi_r \left( i, k_1, k_2, \psi^{i'} \right)$ represent the distance and angle loss between the back-projected points and real-world points, and $\alpha$ is a hyper-parameter that adjusts the weight of each loss. $\left\Vert \overrightarrow{P_{k_1}^i P_{k_2}^i} \right\Vert_2$ and $\left\Vert \overrightarrow{\hat{P}_{k_1}^i(\psi^{i'}) \hat{P}_{k_2}^i(\psi^{i'})} \right\Vert_2$ are vectors that consists of any two real-world and back-projected points on the same vehicle $i$. $k_1$ and $k_2$ represents two non-overlapping indices of the key points on the same vehicle $i$. The distance loss represents the gap between the Euclidean distance of the back-projected points and the one of the real-world points, while the angle loss can be regarded as the sine value of the angle between two vectors formed with the back-projected points and real-world points. 
We further aggregate the loss $L_p \left( i, \psi^{i'} \vert \alpha \right)$ for different vehicles $i$ based on their relative distance. In general, if a vehicle is further from the anchor vehicle, then the loss in the back-projected points is larger, and we have less confidence in these points. Therefore, smaller weights are assigned to vehicles that are fruther from the anchor vehicle. 

The objective of minimizing the fine-tuning loss for all vehicles is formulated to consider different weights due to the relative distance, as presented in Equation~\ref{eq:obj_func}.

\begin{equation}
\label{eq:obj_func}
\begin{array}{cllllll}
    L_f \left( \psi^{i'} \vert \alpha, \tau \right) &=&
\sum_{i < n} \omega \left( \hat{C}_i, \hat{C}_{i'} \vert \tau \right)
L_p \left(  i, \psi^{i'} \vert \alpha \right) \\
\omega \left( \hat{C}_i, \hat{C}_{i'} \vert \tau \right) &=&
\frac{\exp \left( \tau \left\Vert \overrightarrow{\hat{C}_i \hat{C}_{i'}} \right\Vert_2 \right)}
{\sum_{{i''} < n}
\exp \left( \tau \left\Vert \overrightarrow{\hat{C}_{i'} \hat{C}_{i''}} \right\Vert_2 \right)} \\
\hat{C}_i &=& \frac{1}{M_i} \sum_{k < M_i} \hat{P}_{k}^i(\psi^{i'})\\
\hat{C}_{i'} &=& \frac{1}{M_{i'}} \sum_{k < M_{i'}} \hat{P}_{k}^{i'}(\psi^{i'})
\end{array}
\end{equation}

\noindent where $\hat{C}_i$ is the centroid of all back-projected key points on the $i$th vehicle, and $\hat{C}_{i'}$ is the centroid of all back-projected key points on the anchor vehicle $i'$.
$\omega \left( \hat{C}_i, \hat{C}_{i'} \vert \tau \right)$ is the weighting function for vehicle $i$ using the vehicle $i'$ as an anchor. The temperature $\tau$ is a hyper-parameter that controls the distribution of the weighting function. When $\tau = 0$, the weighting function uniformly averages the loss for all vehicles; when $\tau < 0$, more attention will be paid to vehicles that are close to the current vehicle, and vice versa.

To obtain the final estimation of the camera parameters, we minimize the objective $L_f \left( \psi^{i'} \vert \alpha, \tau \right)$ in Equation~\ref{eq:obj_func} for each selection of anchor vehicle. The optimal estimation is selected as that with the minimal loss, as shown in Equation~\ref{eq:final}.

\begin{equation}
\label{eq:final}
\psi = \argmin_{i' < n} L_f \left( \psi^{i'}\vert \alpha, \tau \right)
\end{equation}

As the optimization problem presented in Equation~\ref{eq:final} is non-linear and non-convex, the CMA-ES is again leveraged to solve the optimization problem using $\psi^i$ as initial values.
We note that the parameter space of Equation~\ref{eq:final} is $\psi \in \mathbb{R}^{n \times 13}$, which includes $f, R, T$. $f$ denotes a scalar of the focal length, while $R$ and $T$ are a $3 \times 3$ rotation matrix and a $3 \times 1$ translation vector, respectively. In total, there are $13$ unknown parameters in $\psi$ to be estimated. To further simplify the parameter space in $\psi$,  Proposition~\ref{prop:reduce} is proven to reduce the dimension of parameters to $\mathbb{R}^{7}$.

\begin{proposition}
\label{prop:reduce}
The rotation matrix $R$ can be encoded as a scalar of angle $\theta$ and a vector of rotation axis $\bm{d} \in \mathbb{R}^{3}$.
\end{proposition}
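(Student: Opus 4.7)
The plan is to invoke the classical axis--angle (Rodrigues) representation of $SO(3)$: any orthogonal matrix with $\det R = +1$ can be written as a rotation by some angle $\theta \in [0,\pi]$ about a unit axis $\bm{d}\in\mathbb{R}^{3}$. First I would recall the defining properties of a rotation matrix, namely $R^{T}R = I$ and $\det R = 1$, which follow from the fact that $R$ arises from the orthonormal camera frame in Equation~\ref{eq:full_PnP_eq}. These two properties are the only ingredients required to establish the representation.

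The next step is to produce the axis $\bm{d}$. I would show that $R$ necessarily has an eigenvalue equal to $1$ by the standard computation
\begin{equation*}
\det(R-I) \;=\; \det\bigl(R^{T}(I-R)\bigr) \;=\; \det(R^{T})\,\det(I-R) \;=\; (-1)^{3}\det(R-I),
\end{equation*}
so $\det(R-I)=0$. Any unit vector in $\ker(R-I)$ can be taken as $\bm{d}$, which is then fixed under $R$ and serves as the rotation axis.

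Given the axis, I would recover the angle by changing to an orthonormal basis whose third vector is $\bm{d}$. In that basis $R$ becomes block-diagonal with a $2\times 2$ orthogonal block acting on the plane perpendicular to $\bm{d}$; since $\det R = 1$, this block is a planar rotation by a unique angle $\theta$, which can be read off invariantly from the trace via $\tr(R)=1+2\cos\theta$. Assembling everything in the original basis then yields Rodrigues' formula
\begin{equation*}
R \;=\; I + (\sin\theta)\,[\bm{d}]_{\times} + (1-\cos\theta)\,[\bm{d}]_{\times}^{2},
\end{equation*}
where $[\bm{d}]_{\times}$ is the skew-symmetric cross-product matrix associated with $\bm{d}$. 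This exhibits $R$ explicitly as a function of the pair $(\theta,\bm{d})$ and completes the encoding, reducing the nine entries of $R$ (with six orthonormality constraints) to the four parameters $(\theta,\bm{d})$, or effectively three after imposing $\|\bm{d}\|_{2}=1$, thereby shrinking the parameter space of $\psi$ from $\mathbb{R}^{13}$ to $\mathbb{R}^{7}$ as claimed.

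The proof contains no genuine obstacle: the only mild point to be careful about is non-uniqueness of the encoding (the pairs $(\theta,\bm{d})$ and $(-\theta,-\bm{d})$ produce the same $R$, and the axis is undefined when $R=I$). Since the proposition asserts only that such an encoding \emph{exists}, these ambiguities do not affect the statement and need only be acknowledged in a brief remark.
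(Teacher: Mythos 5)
Your proposal is correct and follows essentially the same route as the paper's proof in \ref{ap:reduce}: the axis--angle (Rodrigues) representation, with the axis obtained as the fixed vector satisfying $R\bm{d}=\bm{d}$ and the angle recovered from $\tr(R)=1+2\cos\theta$. You are in fact slightly more complete than the paper, which simply cites Rodrigues' formula, since you prove the existence of the unit eigenvector via $\det(R-I)=0$ and flag the $(\theta,\bm{d})\leftrightarrow(-\theta,-\bm{d})$ ambiguity.
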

\begin{proof}
See \ref{ap:reduce}.
\end{proof}

\subsection{Vehicle Detection}
\label{section:detection}
In this section, we present the vehicle detection model, which counts the number of vehicles on road segments from camera images. The state-of-the-art vehicle detection models adopt Deep Learning (DL) based methods to train the model on a vehicle-related dataset. The training process of DL models usually requires massive data. Owing to the 4L characteristics, the quantity of annotated camera images for a specific traffic surveillance system cannot support the complete training of a modern DL-based vehicle detection model. In addition, it is inefficient to train new models for each traffic surveillance system separately. Therefore, we adopt the transfer learning scheme to first train the model on traffic-related public datasets, and then apply the model to specific monitoring camera systems \citep{transfer_learning}. 

Existing public datasets are designed for a range of purposes, such as vehicle re-identification (reID), autonomous driving, vehicle detection, {\em etc.}  \citep{lin2014microsoft,deng2009imagenet,yu2018bdd100k,zhang2017citycam,lyu2018ua,dong2015vehicle,luo2018mio}. The camera images in different datasets have different endogenous attributes ({\em e.g.}, focal length, type of photosensitive element, resolution, {\em etc.}) and exogenous attributes ({\em e.g.}, perspective, illumination, directions, {\em etc.}) Additionally, the datasets differ in size. A summary of the existing traffic-related public datasets is presented in Table~\ref{tab:datasets_all}, and snapshots of some of the datasets are shown in Figure~\ref{fig:detection_dataset_preview}. 

\begin{table}[h]
    \centering
    \setlength{\abovecaptionskip}{0pt}    
    \setlength{\belowcaptionskip}{5pt}
    \caption {A summary of traffic-related image datasets.} 
    \label{tab:datasets_all}
    \begin{tabular}{p{0.15\columnwidth}|p{0.08\columnwidth}p{0.11\columnwidth}p{0.17\columnwidth}p{0.35\columnwidth}}
        \hline 
        \textbf{Name}  & \textbf{Size} & \textbf{Resolution} & \textbf{Camera Angle} & \textbf{Original Usage}\\ 
        \hline
        BDD100K & 100,000 & $1280 \times 720$ & Front & Autonomous driving \\
        BIT Vehicle & 9,850 & Multiple & Inclined top & Vehicle reID\\
        CityCam & 60,000 & $352 \times 240$ & Inclined top & Vehicle detection \\
        COCO & 17,684 & Multiple &  Multiple & Object detection \& segmentation \\
        MIO-TCD-L & 137,743 & $720 \times 480$ &  Inclined top & Vehicle detection \& classification \\
        UA-DETRAC & 138,252 & $960 \times 540$ & Inclined top & Vehicle detection \\
        \hline
    \end{tabular} 
\end{table}

\begin{figure}[h]
    \centering
    \includegraphics[width=0.94\columnwidth]{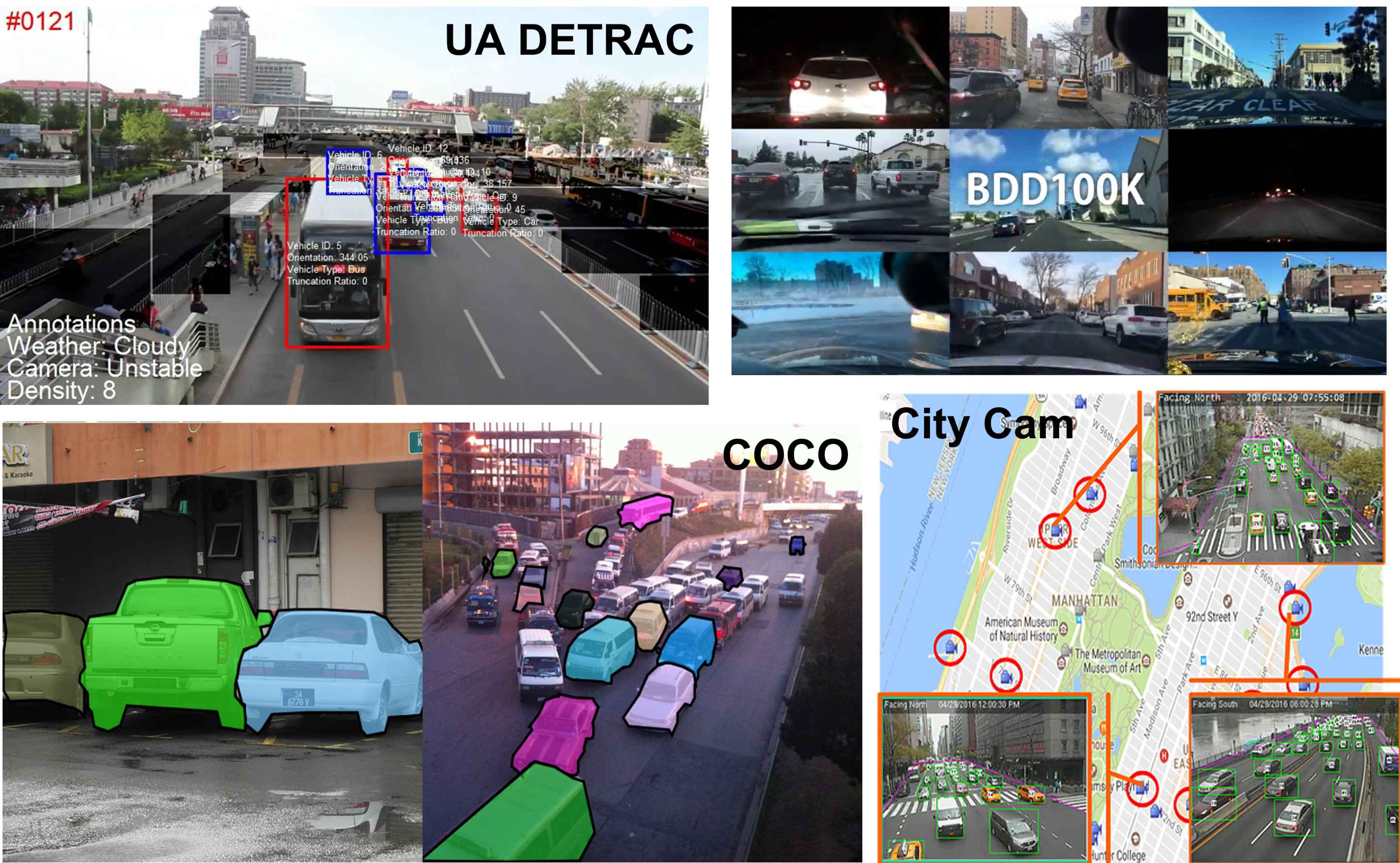}
    \caption{A glance of various traffic-related image datasets used in this study.}
    \label{fig:detection_dataset_preview}
\end{figure}

We categorize the camera images from these datasets into different traffic scenarios, which include time of day (daytime and nighttime), congestion level, surrounding environment, {\em etc.} Each traffic scenario represents a unique set of features in the camera images, so if a DL model is trained for one traffic scenario, it might not perform well on a different scenario. Given the 4L characteristics, the camera images in a large-scale traffic surveillance system may cover multiple traffic scenarios, so it is important to merge and balance the different datasets summarized in Table~\ref{tab:datasets_all} for training the vehicle detection model.

To this end, we formulate a linear program (LP) to hybridize a generalized dataset called the \textbf{LP hybrid dataset}, by re-sampling from multiple traffic-related public datasets. The LP hybrid dataset balances the proportion of images from each traffic scenario to prevent one traffic scenario dominating the dataset. For example, if most camera images are captured during daytime, then the trained vehicle detection model will not perform well on the nighttime images. If different traffic scenarios are comprehensively covered, balanced, and trained, the robustness and generalizability of the detection model will be significantly improved. 

Following the above discussion, the pipeline for the vehicle detection model is presented in Figure~\ref{fig:flowchart_detection}.
\begin{figure}[h]
    \centering
    \includegraphics[width=0.94\columnwidth]{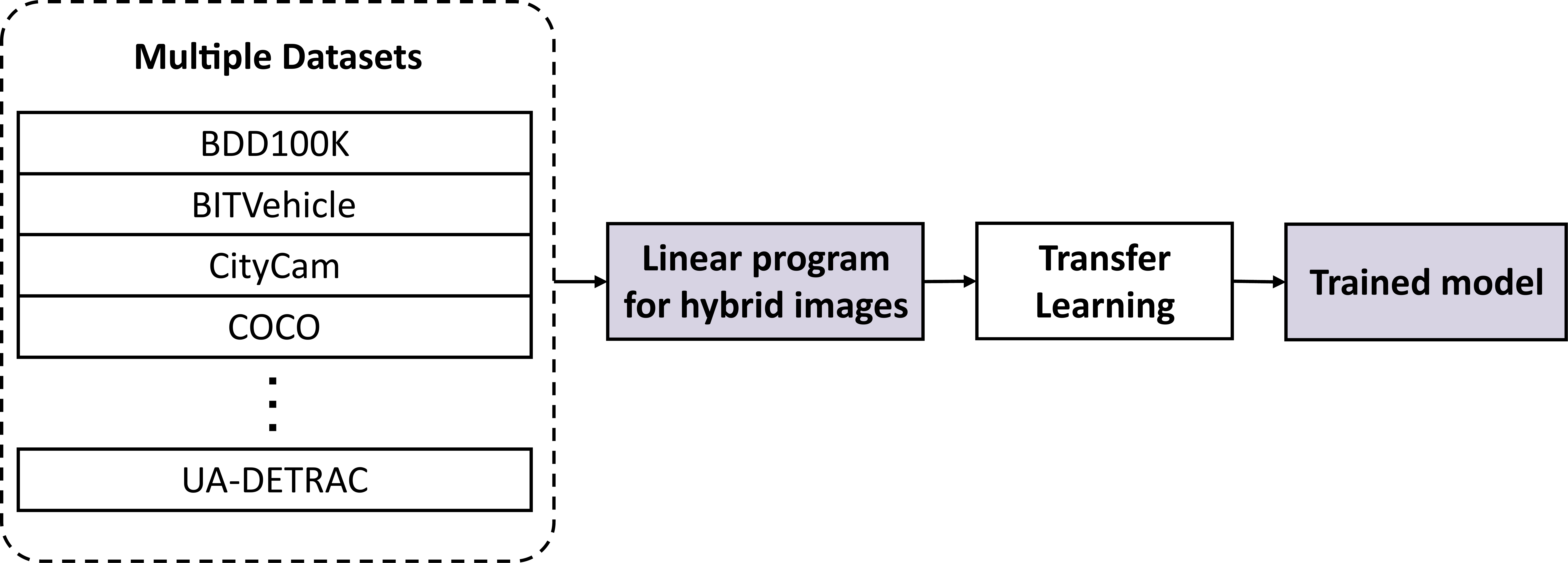}
    \caption{The pipeline of vehicle detection.}
    \label{fig:flowchart_detection}
\end{figure}
One can see that the multiple traffic-related datasets are feed into the LP to generate the LP hybrid dataset, and the dataset will be used to train the vehicle detection model. The trained model can be directly applied to different traffic surveillance systems. 

As stated above, the hybrid detection dataset is formulated as a LP, the goal of which is to maximize the total number of images in the dataset, written as
\begin{equation}
    \max \sum_{\mu=0}^{u-1} \sum_{\nu = 0}^{v-1} q_{\mu,\nu}
    \label{eq:vehicle_LP}
\end{equation}
\noindent where $u$ denotes the number of datasets, and $v$ represents the number of traffic scenarios. $q_{\mu, \nu}$ are decision variables that denotes the number of images to be incorporated into the LP hybrid dataset from dataset $\mu$ for traffic scenario $\nu$.

The constraints of the proposed LP are constructed based on  two principles: 1) The difference between the numbers of images from different traffic scenarios should be limited within a certain range. 2) The number of images contributed by each dataset should be similar. Mathematically, the constraints are presented in Equation~\ref{eq:vehicle_LP_cons}.
\begin{equation}
    \begin{array}{llllllll}
        q_{\mu, \nu} - \frac{\left( 1 + \beta \right) \sum_{\mu=0}^{u-1} q_{\mu, \nu}}{\sum_{\mu=0}^{u-1} \delta(0, Q_{\mu, \nu})} &\leq& 0, &\forall 0 \leq \mu < u, 0 \leq \nu < v, Q_{\mu, \nu} \neq 0 \\
        q_{\mu, \nu} - \frac{\left( 1 - \beta \right) \sum_{\mu=0}^{u-1} q_{\mu, \nu}}{\sum_{\mu=0}^{u-1} \delta(0, Q_{\mu, \nu})} &\geq& 0, &\forall 0 \leq \mu < u, 0 \leq \nu < v, Q_{\mu, \nu} \neq 0 \\
        \sum_{\mu=0}^{u-1} q_{\mu, \nu} - \frac{1 + \gamma}{v} \sum_{\mu=0}^{u-1} \sum_{\nu = 0}^{v-1} q_{\mu, \nu} &\leq& 0, &\forall 0 \leq \nu < v \\
        \sum_{\mu=0}^{u-1} q_{\mu, \nu} - \frac{1 - \gamma}{v} \sum_{\mu=0}^{u-1} \sum_{\nu = 0}^{v-1} q_{\mu, \nu} &\geq& 0, &\forall 0 \leq \nu < v \\
        q_{\mu, \nu} &\leq& Q_{\mu, \nu}, &\forall 0 \leq \mu < u, 0 \leq \nu < v\\
        q_{\mu, \nu} &\geq& 0, &\forall 0 \leq \mu < u, 0 \leq \nu < v
    \end{array}
    \label{eq:vehicle_LP_cons}
\end{equation}

\noindent where the former two constraints adjust the image contribution from different datasets, while the latter two balances the number of images from different traffic scenarios. $Q_{\mu, \nu}$ represents the total number of data for traffic scenario $\nu$ in dataset $\mu$, and $q_{\mu, \nu} \leq Q_{\mu, \nu}$ enforces that the selected number of images should be smaller than the total number of images. $\beta$ is the maximum tolerance parameter for the upper and lower bound of the image number in different traffic datasets given certain scenarios, and $\gamma$ is another maximum tolerance parameter limiting the difference between the numbers of images selected from different scenarios. $\delta(0, Q_{\mu, \nu})$ is defined as $ \delta(0, Q_{\mu, \nu}) = \begin{cases}
0, \quad Q_{\mu, \nu} = 0 \\
1, \quad Q_{\mu, \nu} \neq 0 \\
\end{cases}$. Combining the objective in Equation~\ref{eq:vehicle_LP} and constraints in Equation~\ref{eq:vehicle_LP_cons}, we can formulate the LP hybrid dataset that maximizes the number of data and balances the contributions of data from different datasets as well as traffic scenarios.

The vehicle detection model is built on top of You Only Look Once (YOLO)-v5, a widely used object detection model \citep{yolov5}. YOLO-v5 is initially pre-trained, and we adopt the transfer learning scheme to inherit the pre-trained weights and tune the weight parameters on the LP hybrid dataset. The YOLO-v5 network is a general framework for detecting and classifying objects simultaneously. In the vehicle detection context, we only need to box out the vehicles from the background images regardless of vehicle types. Hence we reshape the output dimension into one with random initialized parameters. As the LP hybrid dataset contains camera images in various traffic scenarios, we can build a generalized detection model suitable for various traffic surveillance systems in different countries.

\section{Numerical Experiments}
\label{section:evaluation}
In this section, we conduct numerical experiments on the proposed camera calibration and vehicle detection methods to evaluate the performance in two traffic monitoring camera systems.

\subsection{Experimental Settings}
To demonstrate that the proposed framework can be applied to traffic density estimation in countries with different traffic surveillance systems, two case studies of traffic density estimation are conducted, one in Hong Kong (\texttt{HK}) and Sacramento, California (\texttt{Sac}) where the ground true data can be obtained at both sites. A comparison for these two cameras is shown in Table~\ref{tab:camera_comp}.
\begin{itemize}
    \item \texttt{HK}: camera images in Hong Kong are obtained from HKeMobility\footnote{\url{https://www.hkemobility.gov.hk/tc/traffic-information/live/cctv}} at the Chatham Road South, footbridge of The Hong Kong Polytechnic University, Kowloon, Hong Kong SAR, with the camera code K109F. Images containing seven vehicles are selected from June 22nd to June 25th, 2020. The resolution of images is $320 \times 240$ pixels.
    \item \texttt{Sac}: the camera images in California are obtained from Caltrans system\footnote{\url{https://cwwp2.dot.ca.gov/vm/iframemap.htm}} at I-50 Highway at 39 Street, Sacramento, CA, the US. Image containing six vehicles are selected from December 6th to December 7th, 2020. The resolution of images is $720 \times 480$ pixels.
\end{itemize} 

\begin{table}[h]
    \centering
    \begin{tabular}{p{0.20\columnwidth}p{0.35\columnwidth}p{0.35\columnwidth}}
    \hline
    \textbf{Attributes} & \texttt{HK} & \texttt{Sac} \\
    \hline
    Resolution & $320 \times 240$ pixels &  $720 \times 480$ pixels \\
    Update rate & 2 minutes &  1/30 seconds \\
    Orientation & Vehicle head & Vehicle tail \\
    Road type & Urban road & Highway \\
    Speed limit & 50 km/h & 105.3 km/h\\
    \hline
    \end{tabular}
    \caption{The comparison for the selected traffic cameras used for case studies in Hong Kong and Sacramento.}
    \label{tab:camera_comp}
\end{table}

For camera calibration, all vehicles are annotated with eight key points: left headlight, right headlight, front license plate center, front wiper center, left wing mirror, right wing mirror, back left corner and back right corner. Any key points not visible in an image are excluded. Besides, five popular vehicle models are involved with three-dimensional information: Toyota Corolla, Toyota Prius, Honda Civic, BMW Series 4 and Tesla Model S. The three-dimensional key points for those models are measured from the \textit{Dimensions}\footnote{\url{https://www.dimensions.com}}.
$\alpha$ in Equation~\ref{eq:obj_func} is set to $6$.

For vehicle detection, all of the datasets summarized in Table~\ref{tab:datasets_all} are incorporated. The ratio factors $\gamma$ and $\beta$ in Equation~\ref{eq:vehicle_LP_cons} are set to $0.25$. The LP hybrid  dataset is divided into a training set (80\%) and validation set (20\%). A total of 3,812 camera images are annotated to test the performance of the model trained on the LP hybrid dataset. 

All experiments are conducted on a desktop with Intel Core i9-10900K CPU @3.7GHz $\times$ 10, 2666MHz $\times$ 2 $\times$ 16GB RAM, GeForce RTX 2080 Ti $\times$ 2, 500GB SSD. 
The camera calibration and vehicle detection models are both  implemented with Python. For the camera calibration model, OpenCV \citep{itseez2015opencv} is used for computing Equation~\ref{eq:full_PnP_eq}  and running the EPnP algorithm \citep{Lepetit2009epnp}. In the candidate generation stage, the focal length is fixed at $350$ millimeters. The CMA-ES algorithm \citep{Hansen1996CMAES} is executed with the Nevergrad package \citep{nevergrad}. The numbers of iterations of CMA-ES in the vehicle model matching and parameter fine-tuning stage are set to 4,000 and 20,000, respectively. When tuning the vehicle detection model, we set the number of training epochs to $300$, and other hyperparameters take the default settings\footnote{\url{https://github.com/ultralytics/yolov5}}. The Adam optimizer \citep{kingmaB2014adam} is adopted with a learning rate of $0.001$.

\subsection{Experimental Results}

In this section, we compare the proposed camera calibration and vehicle detection models with existing baselines, respectively.

\subsubsection{Camera Calibration}
To evaluate the performance of the camera calibration method, we first compare the fine-tuning loss defined in Equation~\ref{eq:obj_func} among baseline models for the two cameras in \texttt{HK} and \texttt{Sac}. Based on the calibration results, we estimate the road length from the camera images, and the length estimated by each model is compared with the actual length.

To demonstrate the necessity of the three steps in \texttt{MVCalib}, Figure~\ref{fig:camera_calibration_loss} plots the fine-tuning loss defined in Equation~\ref{eq:obj_func} for the three stages: candidate generation, vehicle model matching and parameter fine-tuning. In particular, Figure~\ref{fig:3d_loss} includes the losses of all the vehicle index and vehicle model pairs for the first two stages, and Figure~\ref{fig:3d_loss_projection} plots the loss based on the matched vehicle model with the minimal fine-tuning loss. One can see that the fine-tuning loss defined in Equation~\ref{eq:3d_loss} decreases after each stage, which indicates that the CMA-ES can successfully reduce the loss in each stage. 

\begin{figure}[h]
    \centering
    \begin{subfigure}[b]{0.50\textwidth}
        \includegraphics[width=\textwidth]{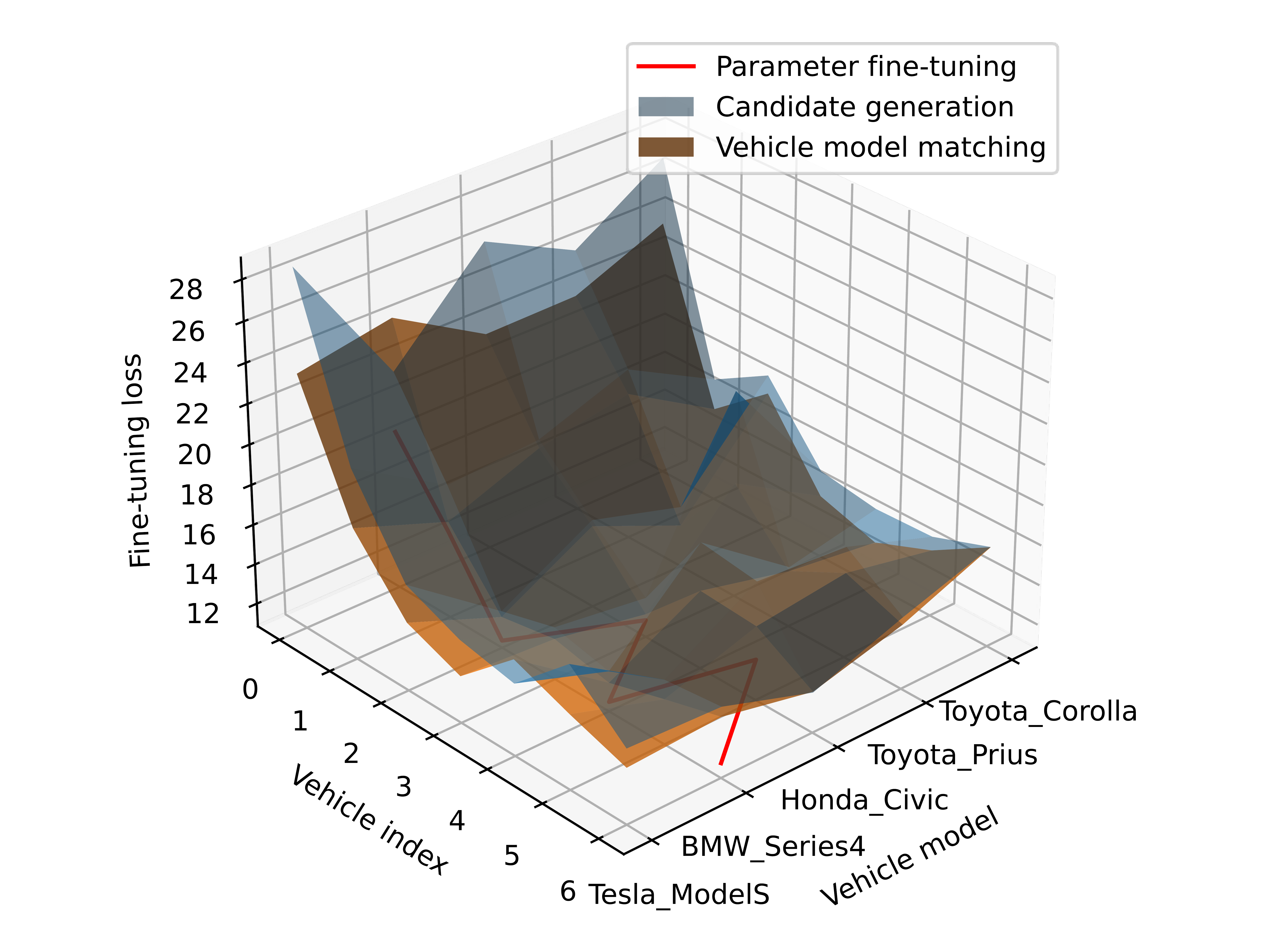}
        \caption{Fine-tuning losses with all indices and models.}
        \label{fig:3d_loss}
    \end{subfigure}
    \begin{subfigure}[b]{0.46\textwidth}
        \includegraphics[width=\textwidth]{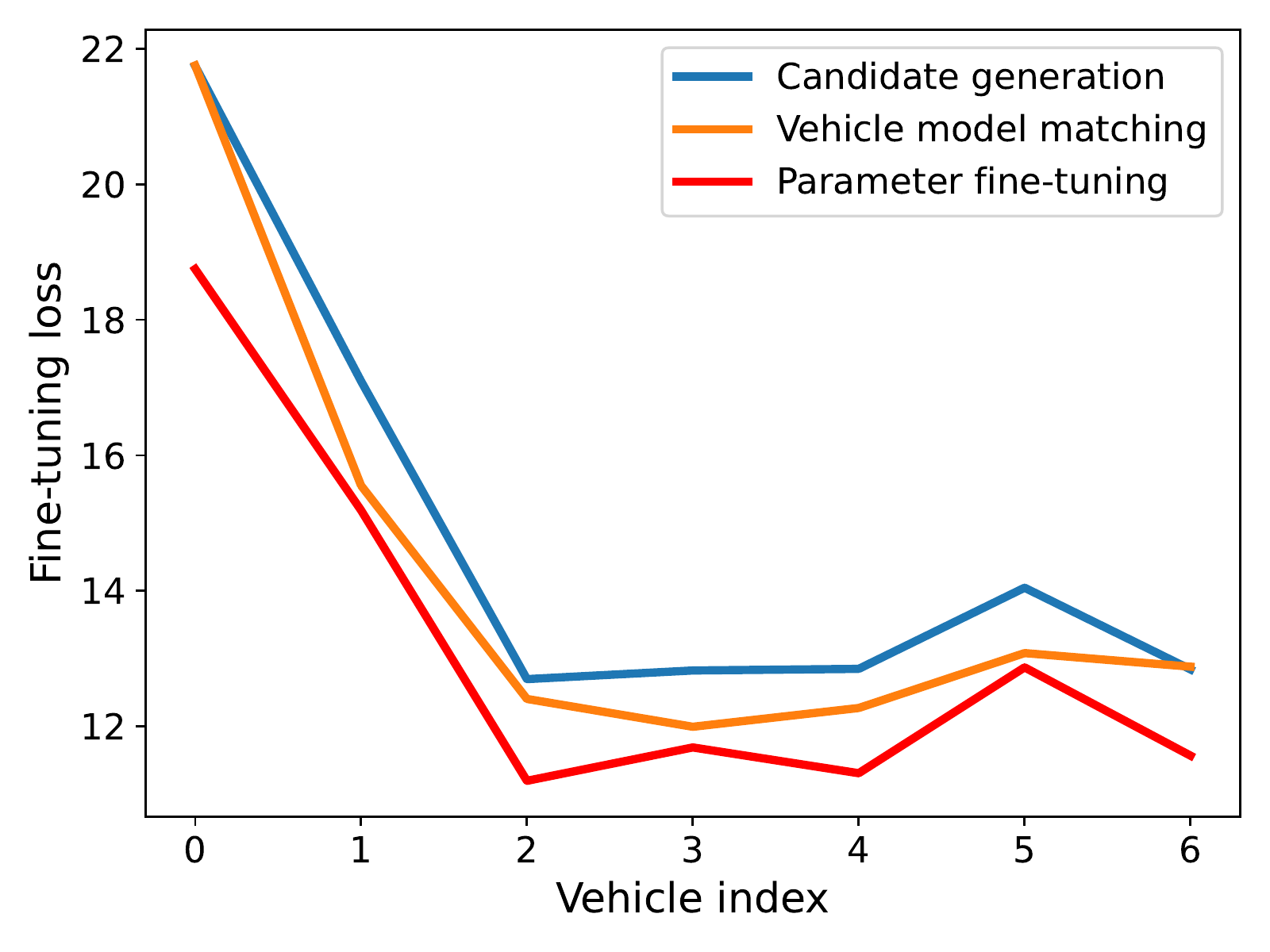}
        \caption{Minimal fine-tuning losses among all vehicle models for each vehicle.}
        \label{fig:3d_loss_projection}
    \end{subfigure}
    \caption{Fine-tuning losses with parameters in the three stages for camera calibration in \texttt{HK} (vehicle index is defined in Equation~\ref{eq:vehicle_index_def}).}
    \label{fig:camera_calibration_loss}
\end{figure}

We then measure the lengths of road markings on the camera images, as the road markings are invariant features on the road, and their lengths can be determined from measurements or official guide books. Detailed road marking information for the \texttt{HK} and \texttt{Sac} studies is shown in Figure~\ref{fig:roadmarks}. 
\begin{figure}[h]
    \centering
    \begin{subfigure}[b]{0.48\textwidth}
        \includegraphics[width=\textwidth]{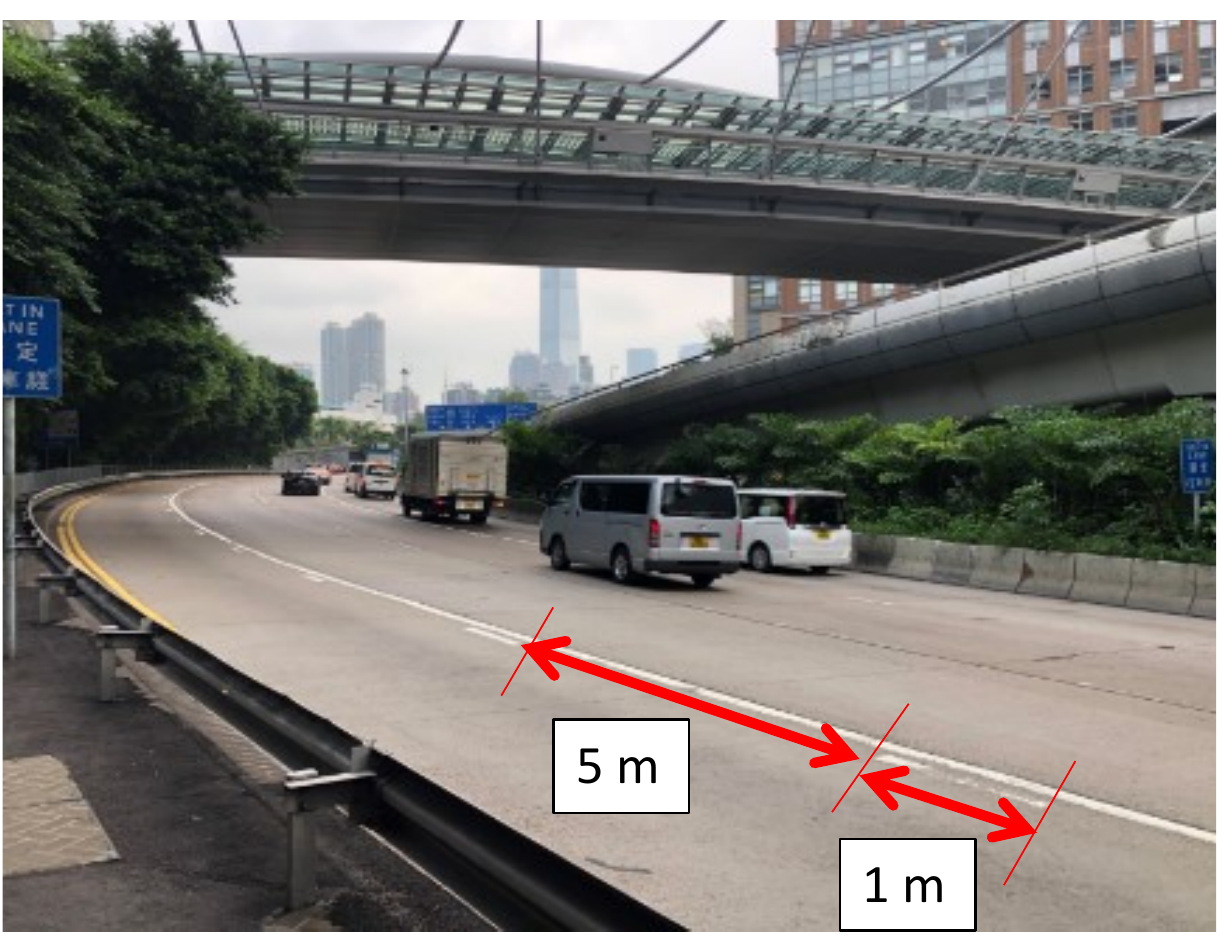}
        \caption{Real size of road markings in \texttt{HK}.}
        \label{fig:roadmarks_a}
    \end{subfigure}
    \begin{subfigure}[b]{0.48\textwidth}
        \includegraphics[width=\textwidth]{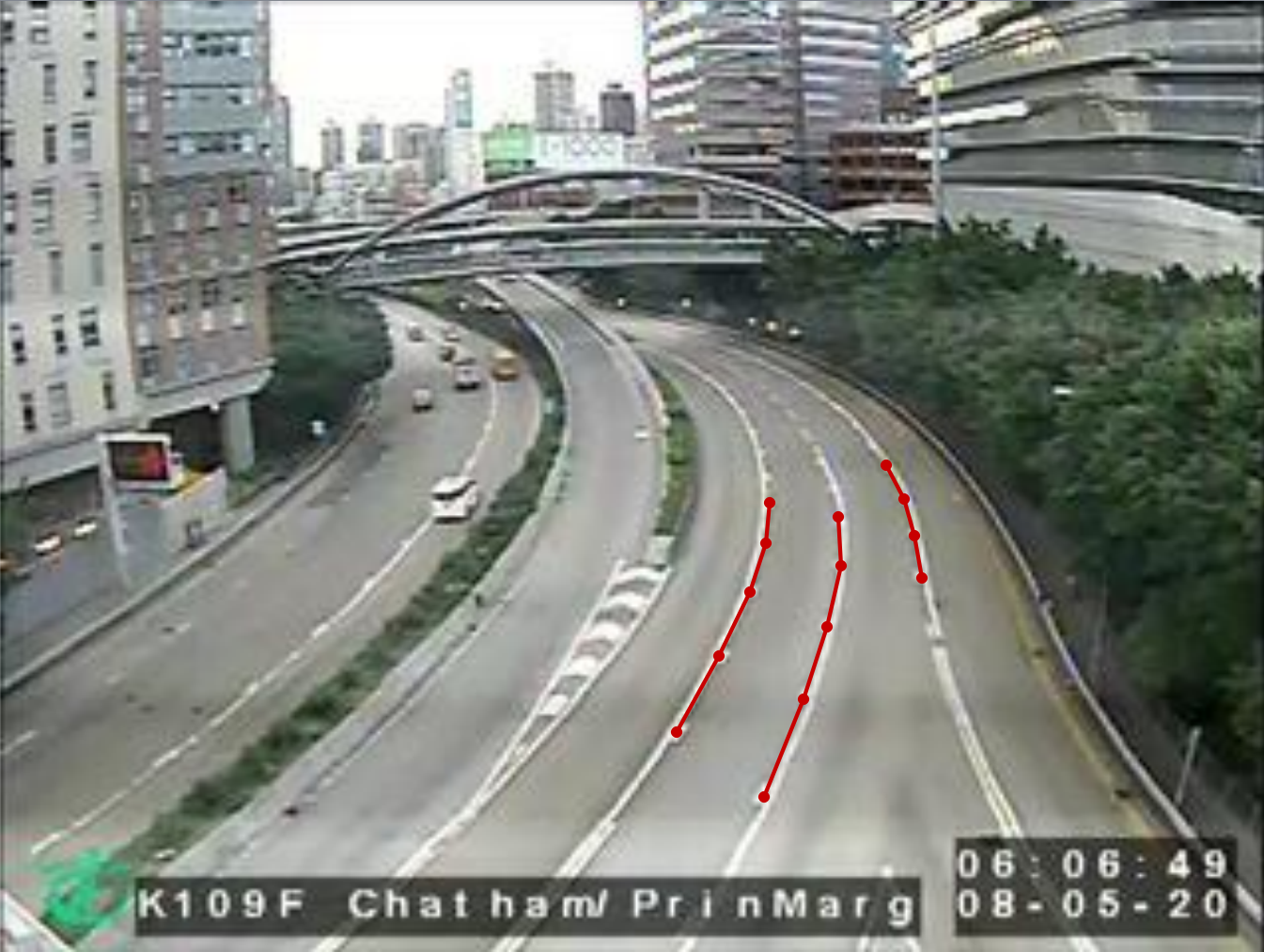}
        \caption{Testing points from camera images in \texttt{HK}.}
        \label{fig:roadmarks_b}
    \end{subfigure}
    \begin{subfigure}[b]{0.48\textwidth}
        \includegraphics[width=\textwidth]{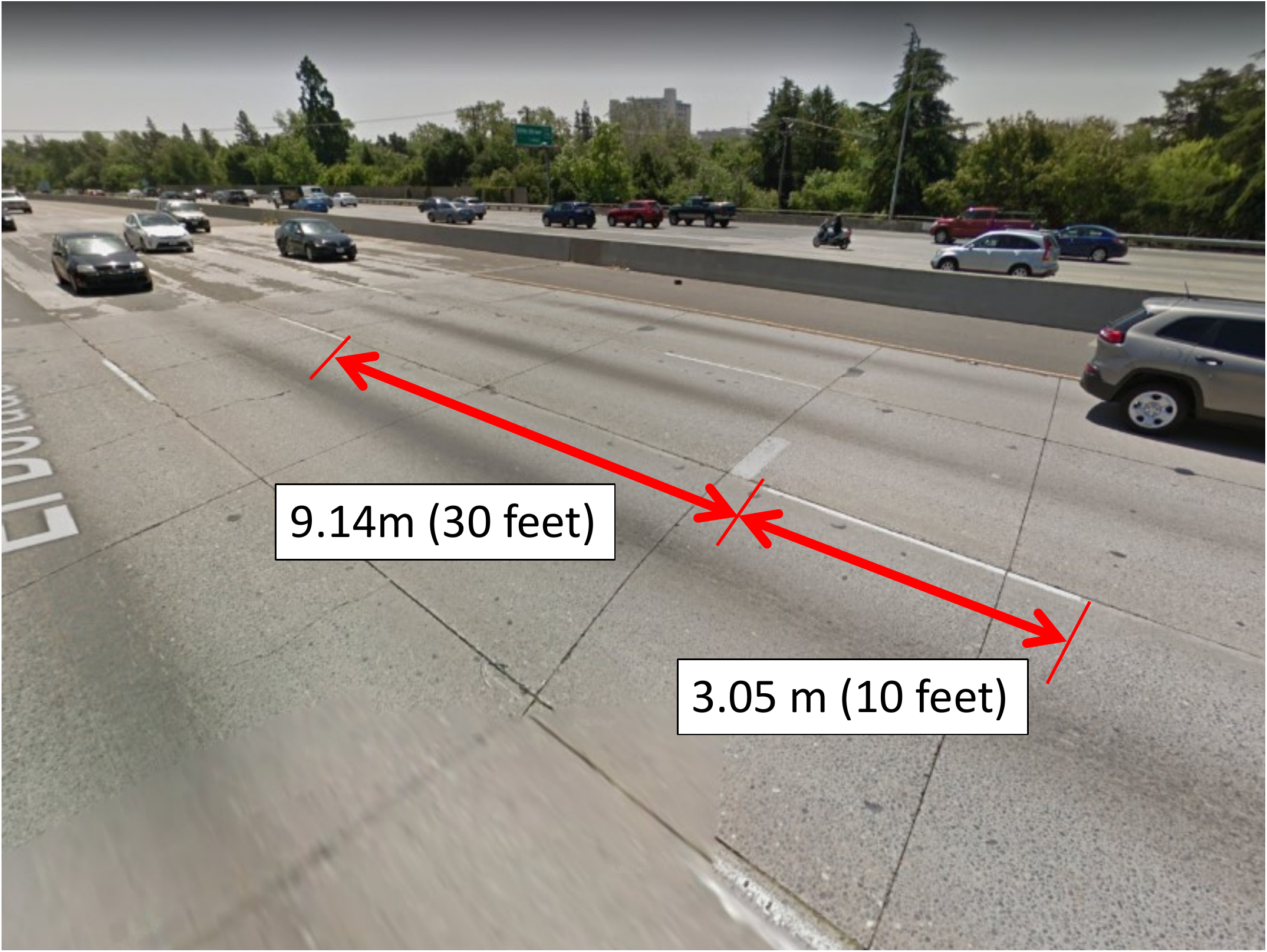}
        \caption{Real size of road markings in \texttt{Sac}.}
        \label{fig:roadmarks_c}
    \end{subfigure}
    \begin{subfigure}[b]{0.48\textwidth}
        \includegraphics[width=\textwidth]{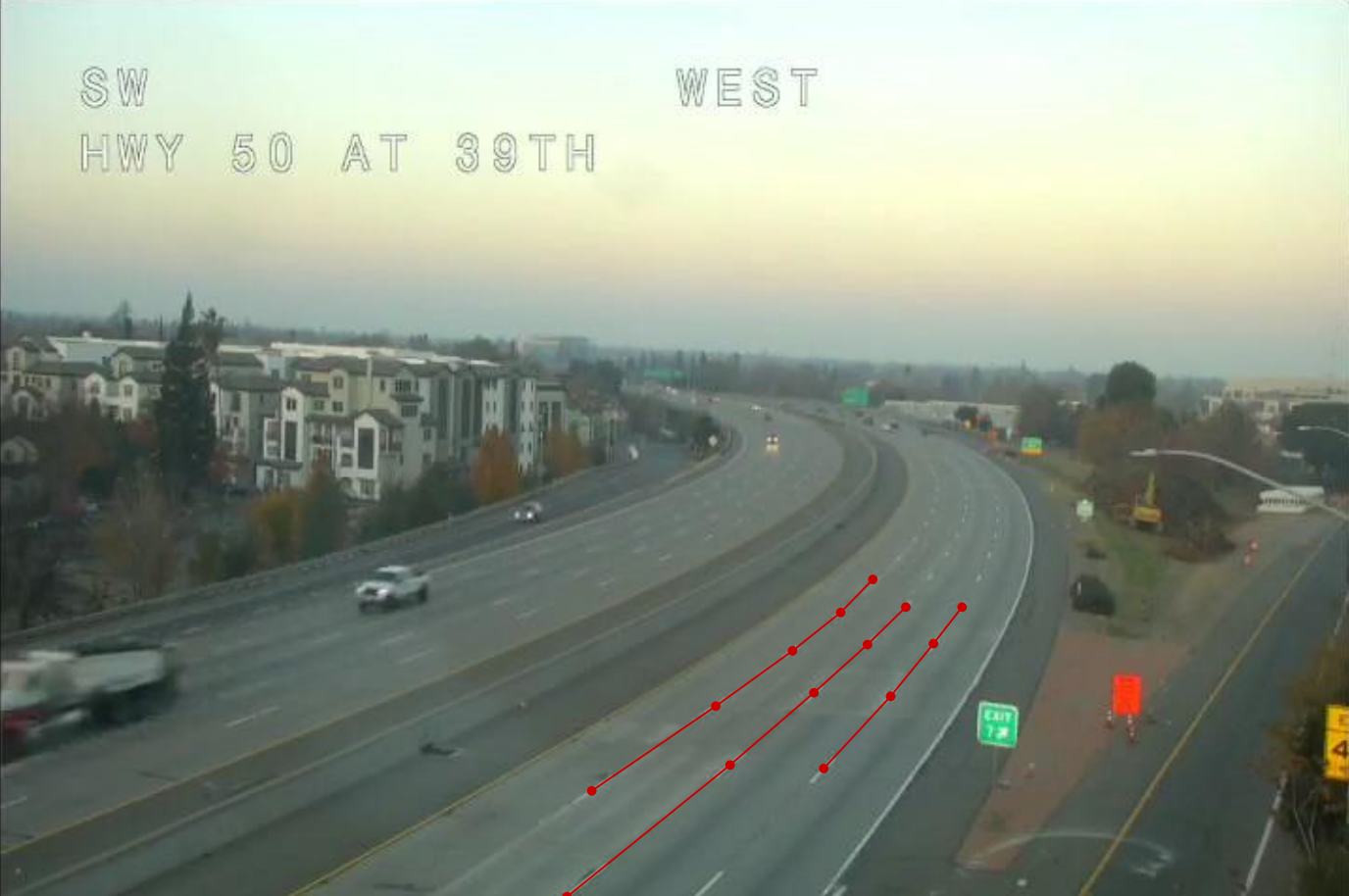}
        \caption{Testing points from camera images in \texttt{Sac}.}
        \label{fig:roadmarks_d}
    \end{subfigure}
    \caption{Driving lanes from real world and camera images in \texttt{HK} and \texttt{Sac}.}
    \label{fig:roadmarks}

\end{figure}
In Figure~\ref{fig:roadmarks_a}, the length of the white line is 1 meter and the interval between the white lines is 5 meters, which are obtained from field measurements. On the camera images, a total of 14 points are annotated at the midpoints of white lines, resulting in 12 line segments of the same length (shown in Figure~\ref{fig:roadmarks_b}). Hence each line segment corresponds to 6 meters in the real world. For the camera images in \texttt{Sac}, we likewise use the actual lengths of the lane markings on the I-50 Highway as the ground truth. According to the Manual on Uniform Traffic Control Devices (MUTCD) \citep{MUTCD2009}, the length of a white line is 10 feet (approximately 3.05 meters) and the interval is 30 feet (approximately 9.14 meters) (shown in Figure~\ref{fig:roadmarks_c}). On the camera images, we annotate 14 points resulting in 12 line segments (shown in Figure~\ref{fig:roadmarks_d}), elongated in 40 feet (approximately 12.19 meters) for each segment.

We compare our method with existing baseline models including \texttt{EPnP} \citep{Lepetit2009epnp}, \texttt{UPnP},  \texttt{UPnP+GN} (\texttt{UPnP} fine-tuned with the Gauss-Newton method) \citep{Penate2013upnp}, \texttt{GPnP} and \texttt{GPnP+GN} (\texttt{GPnP} fine-tuned with the Gauss-Newton method) \citep{Zheng2014gpnp}. 
The calibration results are shown in Table~\ref{tab:calibration_results_Sacramento}. The estimated lengths of the road markings on camera images with the actual lengths are used and three metrics are employed to compare different models: Rooted Mean Square Error (RMSE), Mean Absolute Error (MAE) and Mean Absolute Percentage Error (MAPE). At each stage of \texttt{MVCalib}, we compare its result with baseline methods in terms of their ability to solve the PnPf problem. To conduct an ablation study gauging the contribution of each stage, we run \texttt{MVCalib} with only the first stage (candidate generation), with the first two stages (up to vehicle model matching), and with all three stages. The three models are referred to as \texttt{MVCalib CG}, \texttt{MVCalib VM}, and \texttt{MVCalib}, respectively. In fact, the \texttt{MVCalib CG} is equivalent to the \texttt{EPnP} method.
\begin{table}[h]
    \centering
    \begin{tabular}{p{0.16\columnwidth}|p{0.08\columnwidth}p{0.08\columnwidth}p{0.09\columnwidth}|p{0.08\columnwidth}p{0.08\columnwidth}p{0.09\columnwidth}}
    \hline
    \multirow{2}*{\textbf{Method}} & \multicolumn{3}{c|}{\texttt{HK}} & \multicolumn{3}{c}{\texttt{Sac}} \\
      & \textbf{RMSE} & \textbf{MAE} & \textbf{MAPE}  & \textbf{RMSE} & \textbf{MAE} & \textbf{MAPE} \\
    \hline
    \texttt{UPNP}    & 25.80 & 22.21 & 370.03\% &  9.39 & 6.01 &  49.35\% \\
    \texttt{UPNP+GN}  & 2.02 & 0.62 & 10.36\% &  6.57 & 5.99 & 49.18\% \\
    \texttt{GPNP}     & 3.14  & 2.76 & 46.15\% &  6.59 & 4.87 & 39.96\% \\
    \texttt{GPNP+GN}  & 2.24 & 1.98 & 33.15\% & $>100$ & $>100$ & $>100$\% \\
    \texttt{MVCalib CG}  & 1.68 & 1.49 & 24.91\% &  2.30 & 1.94 & 15.97\% \\
    \texttt{MVCalib VM}  & 0.98 & 0.77 & 12.83\% &  \textbf{0.58} & 0.11 & 0.95\% \\
    \texttt{MVCalib}  & \textbf{0.55} & \textbf{0.20} & \textbf{3.36\%} &  \textbf{0.58} & \textbf{0.10} & \textbf{0.86\%} \\
    \hline
    \end{tabular}
    \caption{Comparison of results of monitoring camera calibrated by different methods in \texttt{HK} and \texttt{Sac} (unit for RMSE and MAE: meter).}
    \label{tab:calibration_results_Sacramento}
\end{table}

One can see from Table~\ref{tab:calibration_results_Sacramento} that \texttt{UPnP (GN)} and \texttt{GPnP (GN)} yield unsatisfactory solutions owing to the low image quality. As they take the focal length into account, the complexity of the problem is significantly increased, and hence they require high-resolution images, and more numerous and accurate annotation points. 

As for the ablation study, we compare \texttt{MVCalib CG}, \texttt{MVCalib VM}, and \texttt{MVCalib} to evaluate the contribution of each stage. In the vehicle model matching stage, if we optimize the focal length with other parameters simultaneously, the estimation results are greatly improved relative to \texttt{MVCalib CG}, demonstrating that the estimation of focal length is necessary and important for the calibration of traffic monitoring camera. In the full \texttt{MVCalib}, we also incorporate the joint information of multi-vehicle under the same camera. \texttt{MVCalib} achieves the best result among all models. For the monitoring camera in \texttt{HK}, the average error is only approximately 20 centimeters for estimating the six-meter road markings, less than 5\% in MAPE. while in \texttt{Sac}, the average error is only 10 centimeters for the forty-foot road markings, less than 1\% in MAPE. 

Besides, \texttt{MVCalib} outperforms the other models in terms of all three metrics, which means that the calibration results are close to the ground truth. Snapshots of calibration results of monitoring cameras in \texttt{HK} and \texttt{Sac} are shown in Figure~\ref{fig:snapshot_calibration}, where the distance between any two red dots is one meter. 
\begin{figure}[h]
    \centering
    \begin{subfigure}[b]{0.48\textwidth}
        \includegraphics[width=\textwidth]{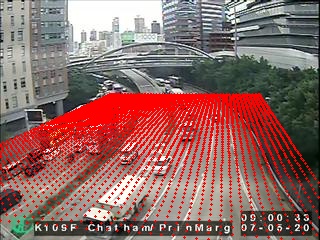}
        \caption{The snapshots of calibration result in \texttt{HK}.}
        \label{fig:snapshot_hk}
    \end{subfigure}
    \begin{subfigure}[b]{0.48\textwidth}
        \includegraphics[width=\textwidth]{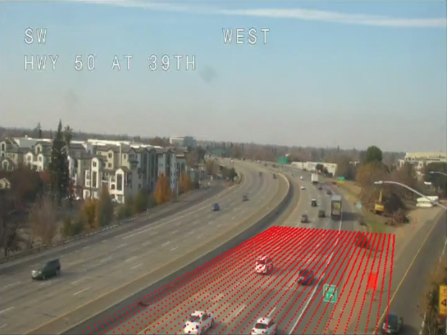}
        \caption{The snapshot of calibration result in \texttt{Sac}.}
        \label{fig:snapshot_sacramento}
    \end{subfigure}
    \caption{The snapshot of calibration results in \texttt{HK} and \texttt{Sac}.}
    \label{fig:snapshot_calibration}
\end{figure}
Owing to the perspective effect, the distance between red dots on images appear closer when they are more distant from the camera. Through visual inspection, we note that the estimation of focal length is reasonable and the skew of perspective error is small.

\subsection{Vehicle Detection}
In the detection model, two traffic scenarios are considered: daytime and nighttime. A total of 76,898 images are hybridized in the LP-hybrid detection dataset after solving for the LP in Equation~\ref{eq:vehicle_LP} and \ref{eq:vehicle_LP_cons}. The detailed allocation of the 76,898 images is presented in  Table~\ref{tab:dataset_detection}. 
\begin{table}[h]
    \centering
    \begin{tabular}{p{0.17\columnwidth}|p{0.26\columnwidth}p{0.26\columnwidth}p{0.2\columnwidth}}
        \hline 
        \textbf{Dataset} & \textbf{\# images in daytime} & \textbf{\# images at nighttime} & \textbf{Total \# images} \\ 
        \hline
        BDD100K & 8,319 & 8,398 & 16,717 \\
        BITVehicle & 7,325 & 0 & 7,325 \\
        CityCam & 8,459 & 0 & 8,459 \\
        COCO & 7,111 & 7,619 & 14,730 \\
        MIO-TCD-L & 8,892 & 7,413 & 16,305 \\
        UA-DETRAC & 7,955 & 5,407 & 13,362 \\
        \textbf{Total} & \textbf{48,061} & \textbf{28,837} & \textbf{76,898} \\
        \hline
    \end{tabular}
    \caption {Allocation of the LP hybrid dataset.} 
    \label{tab:dataset_detection}
\end{table}

To evaluate the generalizability of the vehicle detection model trained on the LP hybrid dataset, we also train the YOLO-v5 individually with the BDD100K, BITVehicle, CityCam, COCO, MIO-TCD-L, and UA-DETRAC datasets for benchmark comparison. Additionally, an integrated dataset incorporating all of the aforementioned datasets without balancing the numbers of images in the daytime and nighttime is also considered, called the Spaghetti dataset, is also compared. For the model trained on each dataset, we report the vehicle detection accuracy on the testing data. Several metrics are used in evaluating the performance of the vehicle detection models, including precision, recall, AP@0.5, and AP@0.5:0.95. Interpretation about these metrics is shown in \ref{apx:explain_metrics}.

\begin{table}[h]
    \begin{tabular}{p{0.15\columnwidth}|p{0.13\columnwidth}p{0.13\columnwidth}p{0.14\columnwidth}p{0.14\columnwidth}p{0.14\columnwidth}}
    \hline
    \textbf{Name} & \textbf{Precision} & \textbf{Recall} & \textbf{mAP@0.5} & \textbf{mAP@0.5:0.95} & \textbf{Dataset size} \\
    \hline
    BDD-100K & 0.361 & 0.364 & 0.326 & 0.144 & 100,000 \\
    BITVehicle & 0.255 & 0.009 & 0.062 & 0.035 & 9,850 \\
    CityCam & 0.412 & 0.938 & 0.881 & 0.538 & 60,000 \\
    COCO & \textbf{0.978} & 0.017 & 0.556 & 0.340 & 17,684 \\
    MIO-TCD-L & 0.737 & 0.885 & 0.899 & 0.578 & 137,743 \\
    Pretrained & 0.455 & 0.899 & 0.838 & 0.552 & 0 \\
    UA-DETRAC & 0.775 & 0.693 & 0.758 & 0.488 & 138,252 \\
    Spaghetti & 0.605 & 0.948 & \textbf{0.927} & \textbf{0.608} & \textbf{434,993} \\
    \textbf{LP hybrid} & 0.583 & \textbf{0.949} & 0.921 & 0.594 & 76,898 \\
    \hline
    \end{tabular}
    \caption{Evaluation results for different detection models on images in daytime.}
    \label{tab:detection_results_day}
\end{table}
\begin{table}[h]
    \begin{tabular}{p{0.15\columnwidth}|p{0.13\columnwidth}p{0.13\columnwidth}p{0.14\columnwidth}p{0.14\columnwidth}p{0.14\columnwidth}}
    \hline
    \textbf{Name} & \textbf{Precision} & \textbf{Recall} & \textbf{mAP@0.5} & \textbf{mAP@0.5:0.95} & \textbf{Dataset size} \\
    \hline
    BDD-100K & 0.443 & 0.316 & 0.302 & 0.124 & 100,000 \\
    BITVehicle & 0.058 & 0.001 & 0.018 & 0.010 & 9,850 \\
    CityCam & 0.402 & 0.793 & 0.713 & 0.412 & 60,000 \\
    COCO & \textbf{0.949} & 0.003 & 0.397 & 0.223 & 17,684 \\
    MIO-TCD-L & 0.805 & 0.746 & 0.817 & 0.511 & 137,743 \\
    Pretrained & 0.387 & 0.862 & 0.781 & 0.471 & 0 \\
    UA-DETRAC & 0.708 & 0.573 & 0.629 & 0.365 & 138,252 \\
    Spaghetti & 0.689 & 0.872 & 0.882 & \textbf{0.546} & \textbf{434,993} \\
    \textbf{LP hybrid} & 0.653 & \textbf{0.89} & \textbf{0.886} & 0.545 & 76,898 \\
    \hline  
    \end{tabular}
    \caption{Evaluation results for different detection models on images at nighttime.}
    \label{tab:detection_results_night}
\end{table}
Tables~\ref{tab:detection_results_day} and \ref{tab:detection_results_night} present the evaluation results for the models trained with the LP hybrid and other datasets for daytime and nighttime, respectively.
The model trained on the COCO dataset reaches the highest precision, while its recall is less than 2 percent, meaning that the model is highly confident in detecting a small portion of vehicles from camera images, but also tends to miss many vehicles. The model trained on the LP hybrid dataset reaches the highest recall and also achieves an acceptable precision rate. For the metrics of mAP@0.5 and mAP@0.5:0.95, the model trained on the Spaghetti dataset achieves the best performance, but the gap between the models trained on the Spaghetti dataset and the LP hybrid dataset for mAP@0.5 is less than 1\% and the gap for mAP@0.95 is less than 2\%. For images at nighttime, the model on the LP hybrid dataset outperforms that trained on the Spaghetti dataset on mAP@0.5, indicating that the proposed LP hybrid dataset can improve the detection performance at night. Moreover, as there are fewer than 80,000 images in the LP hybrid dataset, but more than 400,000 images in the Spaghetti dataset, it takes only 6 days to train a model on the LP hybrid dataset, while the training time on the Spaghetti dataset is beyond 21 days.

\section{Case Study I: monitoring cameras in Hong Kong}
\label{section:case_study_1}

In this section, we conduct a case study of traffic density estimation using camera images on the Chatham Road South, underneath the footbridge of the PolyU, Hong Kong SAR. Given the study region, we divide the roads into four lanes (numbered along the x-axis), and define vehicle locations along the y-axis, as shown in Figure~\ref{fig:K109F_lane_split}. 
\begin{figure}[h] 
    \centering 
    \includegraphics[width=0.5\textwidth]{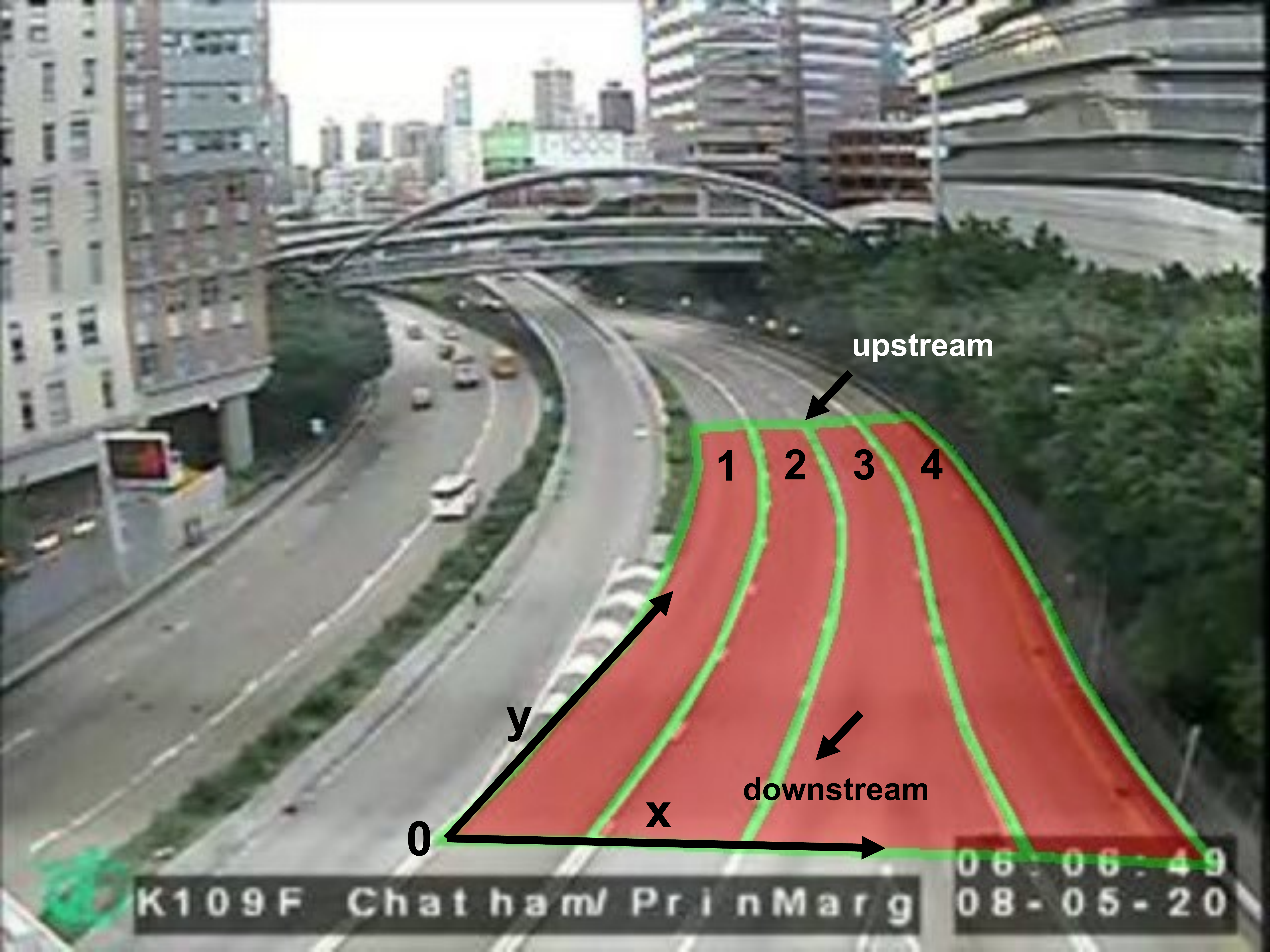}
    \caption{Driving lanes for traffic density estimation beneath the monitoring camera.} 
    \label{fig:K109F_lane_split} 
\end{figure}

The length of each lane can be estimated from the images using the calibration results, and the number of vehicles can be counted using the vehicle detection model. The traffic density in each lane can be estimated by dividing the number of vehicles by the length of each lane at each location and time point. To evaluate the estimated density, a high-resolution ($1920 \times 1080$ pixels per frame) camera is installed shooting the same region with different directions, and the camera video is acquired in this case study as a ground truth. The video recorded by this camera, showing the traffic conditions over 21 hours from 11:30 PM, September 23th to 8:30 PM, September 24, 2020.

\begin{figure}[h]
    \centering
    \includegraphics[width=0.95\textwidth]{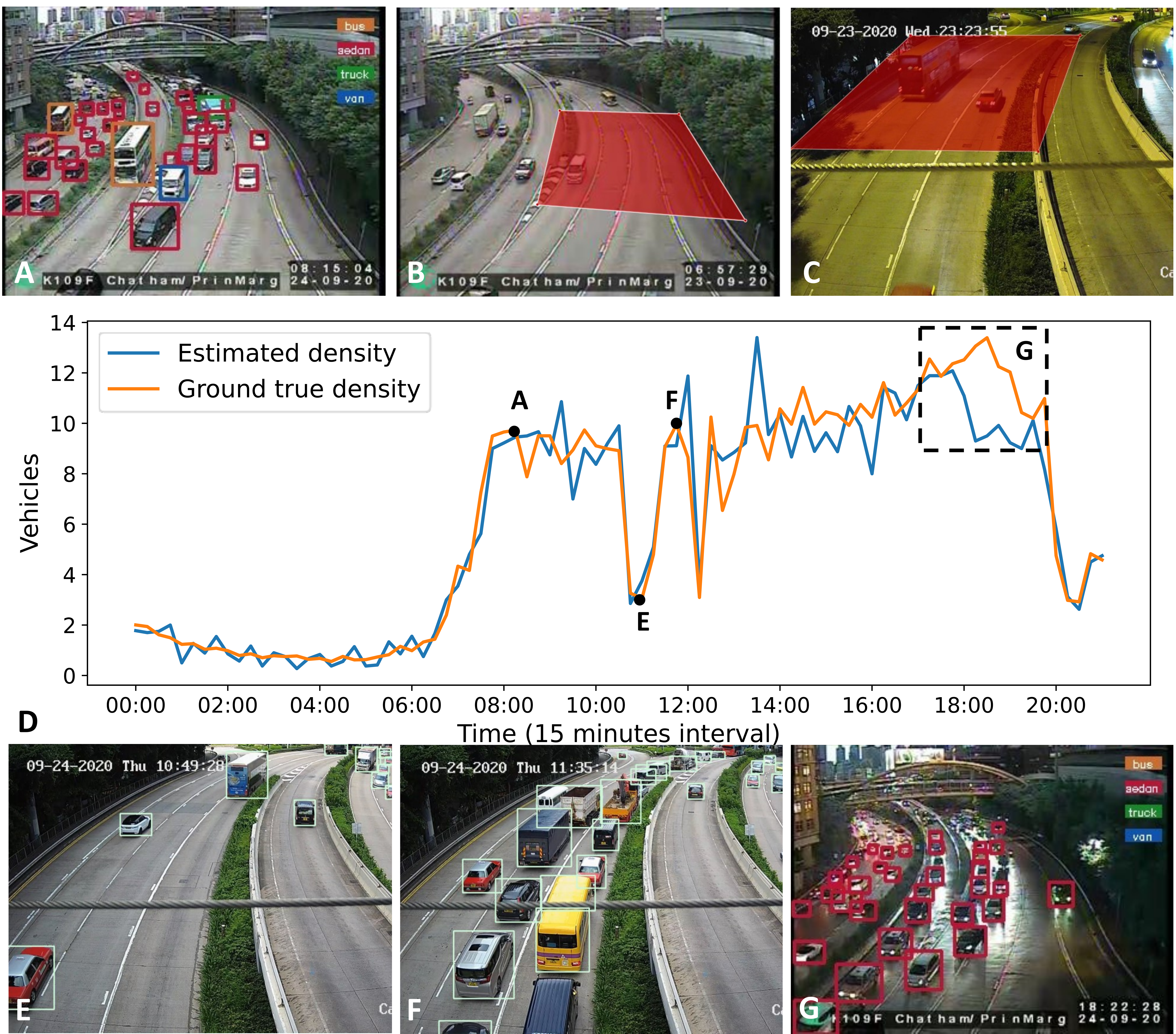}
    \caption{Overview of the vehicle detection results in \texttt{HK}.}
    \label{fig:case_study_HK_montage}
\end{figure}

\subsection{Estimation accuracy}

An overview of the vehicle detection results is presented in Figure~\ref{fig:case_study_HK_montage}. Figure~\ref{fig:case_study_HK_montage}A displays a snapshot of vehicle detection using the model trained on the LP hybrid dataset of images taken in daytime. By boxing out identical study regions in the traffic monitoring camera images and high-resolution videos (shown in Figure~\ref{fig:case_study_HK_montage}B and \ref{fig:case_study_HK_montage}C), the estimated number of vehicles can be compared with the ground truth in Figure~\ref{fig:case_study_HK_montage}D. We select four points or regions in Figure~\ref{fig:case_study_HK_montage}D, which are shown in Figure~\ref{fig:case_study_HK_montage}A, E, F and G. Figure~\ref{fig:case_study_HK_montage}A shows the beginning of the morning peak when the vehicle number significantly increases. Lanes \#1 and \#2 in the study region (numbered from the left) become visibly congested in the camera images. Points E and F are a pair of points that depicts contrasting traffic conditions when the traffic density fluctuates dramatically in a short time interval. If we inspect images taken around 11:00 AM and 11:30 AM, respectively on September 24, 2020, which are the corresponding points E and F. In Figure~\ref{fig:case_study_HK_montage}E, it can be seen that there are few vehicles on the road, and hence the traffic density is relatively low at point E. However, at point F, there is a sharp increase in the demand on the road. The traffic condition oscillates owing to the traffic signals downstream, which causes the pronounced changes between points E and F. Figure~\ref{fig:case_study_HK_montage}G depicts the traffic conditions at the evening peak when the vehicle number reaches the daily maximum. The evening peak fades away quickly and disappears at approximately 8:00 PM.

Compared to the estimated and ground true traffic density, the developed model succeeded in tracking the growth of the morning peak and detecting the fluctuation of traffic conditions. However, at point G, some of the vehicles are miss-detected in the evening peak. This may have been caused by dazzles from the headlights and the light reflected from the ground, which make it difficult for the vehicle detection model to identify the features of vehicles. This phenomenon is a common issue in Computer Vision  (CV), which will be left for future research. Overall, the estimated result is close to the ground truth most of the time, which demonstrates that the detection model can accomplish an accurate detection despite the low-resolution and low-frame-rate of the images.

The RMSE, MAE, and MAPE of the estimated traffic density for each lane and the entire road are presented in Table~\ref{tab:K109F_density}, and a comparison of traffic densities from estimation and the ground truth is shown in Figure~\ref{fig:case_study_hk_lane_density}. 
\begin{table}[h]
    \begin{tabular}{p{0.20\columnwidth}|p{0.23\columnwidth}p{0.23\columnwidth}p{0.23\columnwidth}}
    \hline
    \textbf{Lane ID} & \textbf{RMSE} & \textbf{MAE} & \textbf{MAPE} \\
    \hline
    Lane \#1 & 16.94 & 12.65 & 19.60\% \\
    Lane \#2 & 12.98 & 9.23 & 27.48\% \\
    Lane \#3 & 11.11 & 7.77 & 41.24\% \\
    Lane \#4 & 8.70 & 6.53 & 50.44\% \\
    Average & 12.43 & 9.04 & 34.69\% \\
    \hline
    \end{tabular}
    \caption{The RMSE, MAE and MAPE of the estimated density for different lanes from monitoring cameras in \texttt{HK} (unit: RMSE, MAE: veh/km/lane).}
    \label{tab:K109F_density}
\end{table}

\begin{figure}[h]
    \centering 
    \includegraphics[width=1\textwidth]{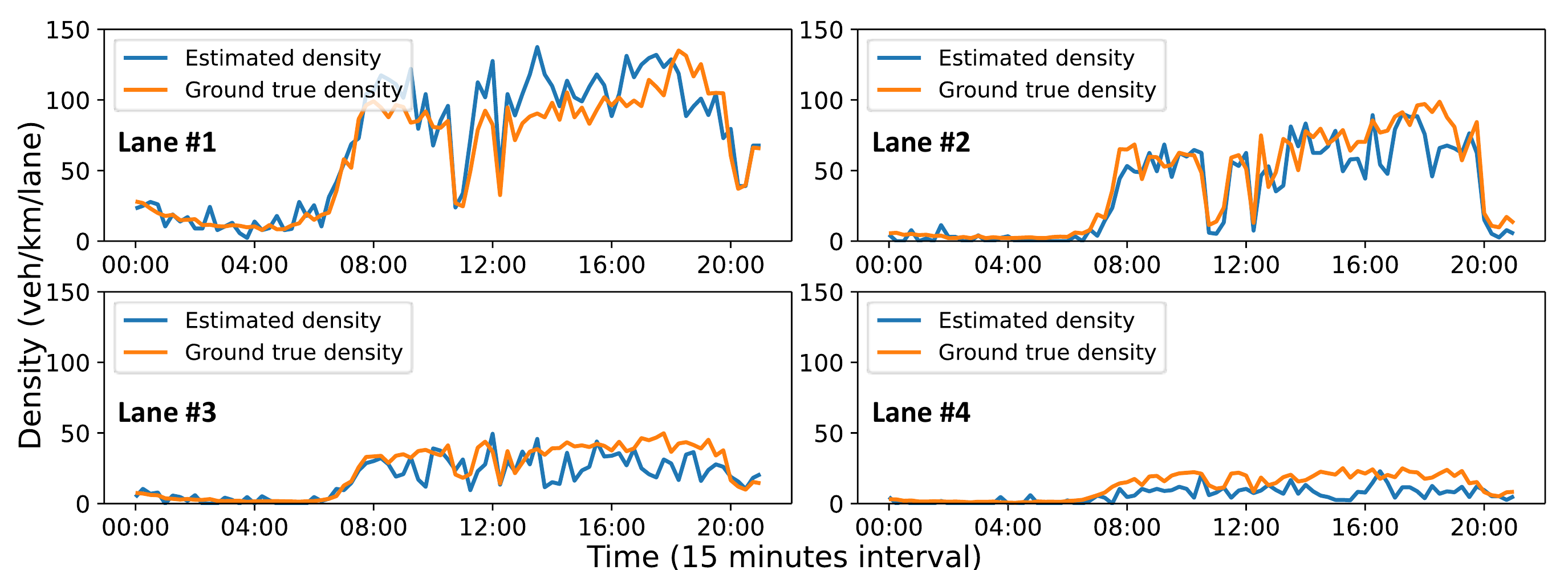}
    \caption{Traffic density from 0:00 AM to 9:00 PM in \texttt{HK}}
    \label{fig:case_study_hk_lane_density}
\end{figure}

One can see that the estimated density approximates the actual density, and the density fluctuation is accurately captured. The MAPE is relatively high because this metric is sensitive when the density is small. For example, if the true density is 2 veh/km/lane, while the estimated density is 1 veh/km/lane, then the MAPE is 50\%.
The traffic density of Lane \#1 is overestimated with an MAE of approximately 12 veh/km/lane, while the traffic density of Lanes \#2, \#3 and \#4 are underestimated with an MAE of approximately 9, 7, and 6 veh/km/lane, respectively. 
The possible causes of the under estimation and over estimation are two-fold: 1) The frame rate is not sufficient enough to support an individual estimation for each lane. Since the image will be updated once every two minutes, an average of 7.5 images will be accumulated in a time interval of 15 minutes. The estimation may result in a biased estimation since the small-size samples happen to capture the non-recurrent patterns of the traffic density. 2) The determination of the lane of each vehicle may be biased, as the lane occupied by each vehicle is determined by the center of the bounding box. When the road is curved in images and the vehicle is large, the center of the bounding box may shift to another lane, affecting the accuracy of the estimations in both lanes.

\subsection{Spatio-temporal patterns of the density}
Using the developed density estimation framework, we estimate the 24/7 density for the same study region from June 22 to September 22, 2020. The traffic density is estimated on each day and at each location, and we can also average the spatio-temporal density by time and by location.

 Figure~\ref{fig:his_veh_distribution} shows the spatio-temporal distribution of the traffic density in the study region on an averaged day. Note the data in averaged day is the average data from June 22 to September 22, 2020, where a lighter color represents a higher traffic density. 
\begin{figure}[h]
    \centering 
    \includegraphics[width=0.98\textwidth]{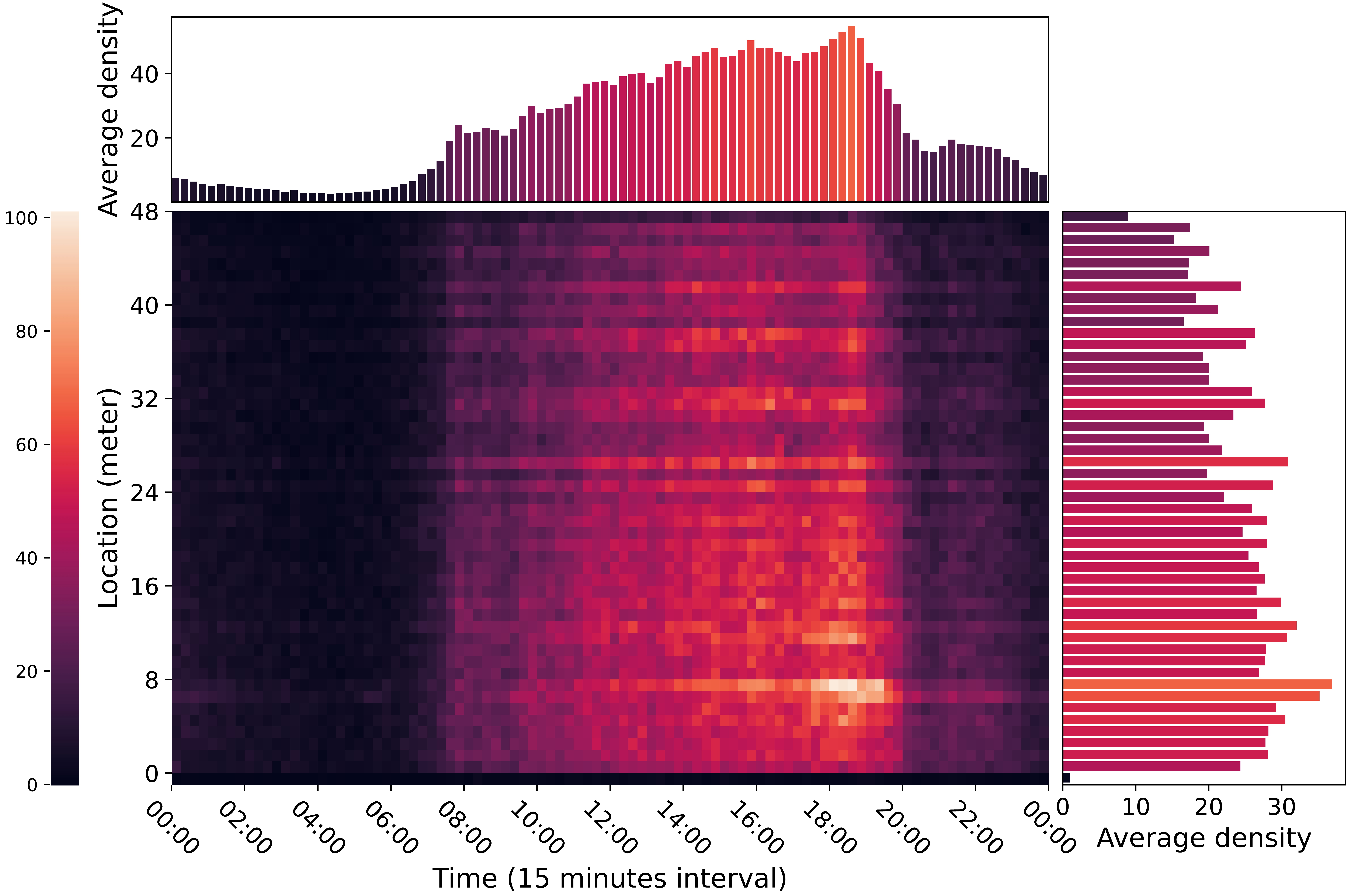}
    \caption{The spatio-temporal distribution of traffic density on an average day (unit: veh/km/lane) in \texttt{HK}.} 
    \label{fig:his_veh_distribution}
\end{figure}
One can see that the morning peak usually starts at approximately 7:00 AM when the number of vehicles begin to grow rapidly. After 8:00 AM, the number of vehicles continues to grow, but more slowly, until 1:00 PM, when the traffic density briefly decreases. The traffic density then grows again until 5:00 PM where another brief decrease of traffic density occurs. The density then reaches the highest level at the evening peak at approximately 6:30 PM. Finally, the traffic density decreases significantly and falls to a low level (approximately 10 veh/km/lane) after 9:00 PM.

We further visualize the density variations during one week. Figure~\ref{fig:his_density_week} presents the average traffic density among all lanes on each day of the week and at each time of day.
\begin{figure}[h] 
    \centering 
    \includegraphics[width=0.98\textwidth]{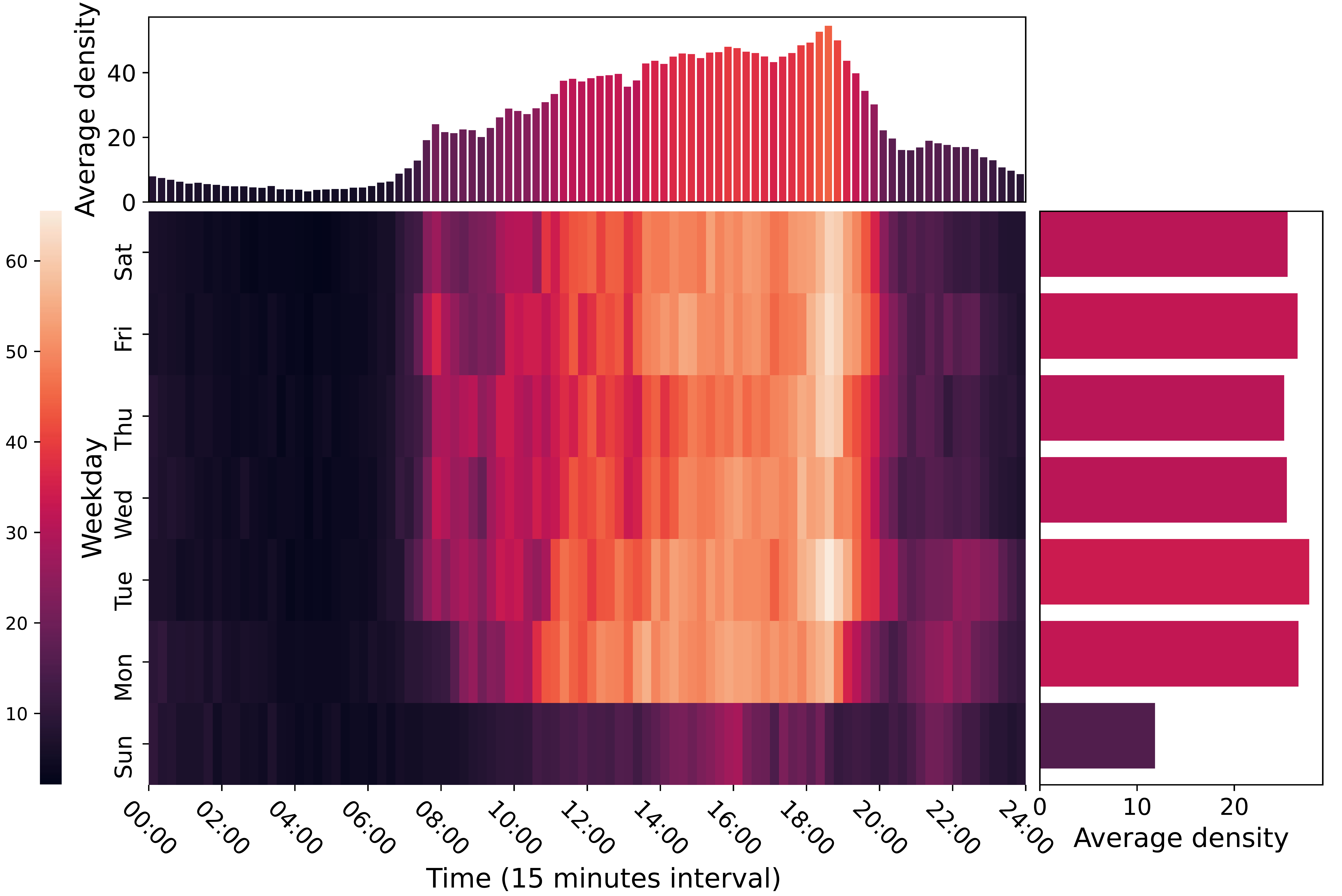}
    \caption{The distribution of traffic density in terms of the day of week and time of the day in \texttt{HK}.} 
    \label{fig:his_density_week}
\end{figure}
As shown in Figure~\ref{fig:his_density_week}, the traffic density shows the similar patterns from Monday through Saturday but it drops considerably on Sunday. This is consistent with the fact that the Hong Kong Government stipulates that only Sunday is a public holiday every week, and some companies require employees to work for five-and-a-half or six days a week. Hence, the traffic density on Saturday is similar to other workdays. On Sunday, fewer people choose to travel, resulting in decreased traffic density on the road. Moreover, since the travel demand decreases on Sunday, the morning and evening peaks also disappear while the average density is only a half compared to that on workdays. On Monday, the morning and evening peaks return, but the heatmap shows that the morning peak is postponed by an hour while the evening peak is advanced by an hour. As the average daily traffic density is similar to that on other weekdays, this may indicate that the traffic congestion will be more serious on Mondays as more vehicles will occupy the road during peak hours.

 
\begin{figure}[h]
    \centering
    \includegraphics[width=1\textwidth]{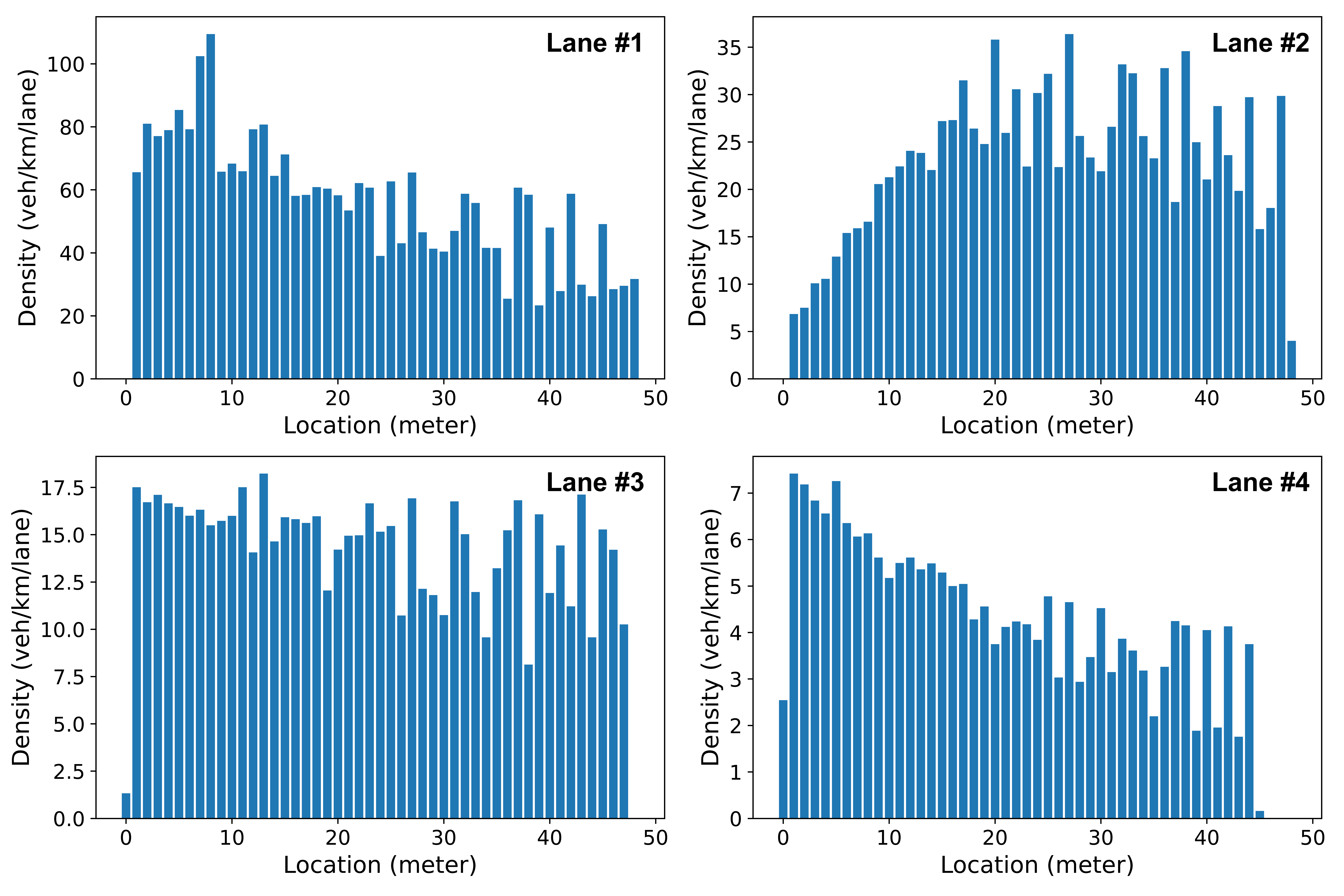}
    \caption{Average density along the road for each lane in \texttt{HK}.}
    \label{fig:location_hist_HK}
\end{figure}

We further plot the average density along the road for each lane separately, as shown in Figure~\ref{fig:location_hist_HK}.
Evidently, the spatial distribution of the density in \texttt{HK} is not uniform. For Lanes \#1, \#2, and \#4, vehicles are more prone to accumulate at the downstream sections of roads than upstream, possibly because of a bottleneck at the downstream. Additionally, the density of Lane \#1 is much higher than those of the other lanes,  because many vehicles use Lane \#1 to merge right at the downstream.

\subsection{Calibration of the fundamental diagrams}

Combining the traffic speed data for the same time interval, the fundamental diagrams can be calibrated using the estimated density data. While the traffic density data can be obtained from traffic monitoring cameras in Hong Kong, the traffic speed data for the same road can be acquired from the Traffic Speed Map published by the Hong Kong Transport Department \citep{traffic_speed_map}. Figure~\ref{fig:fd_hk} demonstrates the calibrated relationship between density-speed and density-flow. 
\begin{figure}[h] 
    \centering 
    \includegraphics[width=0.98\textwidth]{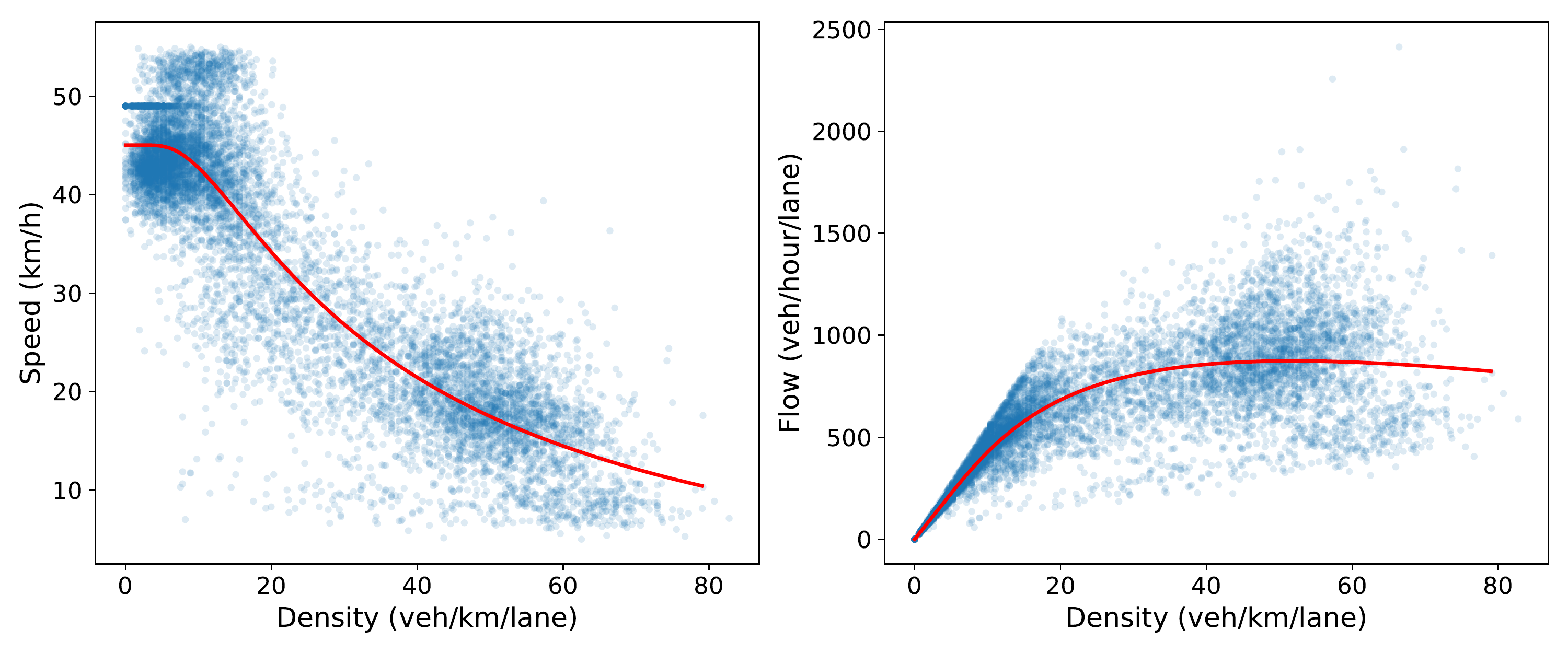}
    \caption{The calibrated fundamental diagrams in \texttt{HK}.} 
    \label{fig:fd_hk}
\end{figure}
It has been shown that the average speed decreases with increasing traffic density. If we use the Newell model \citep{newell} to fit the density-speed and density-flow maps (shown in Equation~\ref{eq:newell_model}), the result is shown in Figure~\ref{fig:fd_hk} using red line, and the parameter for free flow speed $\overline{v}_f = 45.03$km/h, jam density $k_j=239.86$ veh/km/lane and the slope of the speed-spacing curve $\lambda = 1396.43$, respectively. The maximum flow rate in Figure~\ref{fig:fd_hk} is $873.79$ veh/hour/lane.
\begin{equation}
    \overline{v} = \overline{v}_f \left( 1 - \exp \left( -\frac{\lambda}{\overline{v}_f} \left( \frac{1}{k} - \frac{1}{k_j} \right) \right) \right)
    \label{eq:newell_model}
\end{equation}

\clearpage
\section{Case Study II: monitoring cameras in Sacramento}
\label{section:case_study_2}

To demonstrate the generalizability of the proposed framework, another case study is conducted using a monitoring camera in the Caltrans system. 
The monitoring video data is collected from the camera on the I-50 Highway at 39th Street, Sacramento, CA (shown in Figure~\ref{fig:case_study_cal_location} left).
\begin{figure}[h] 
    \centering
    \includegraphics[width=1\textwidth]{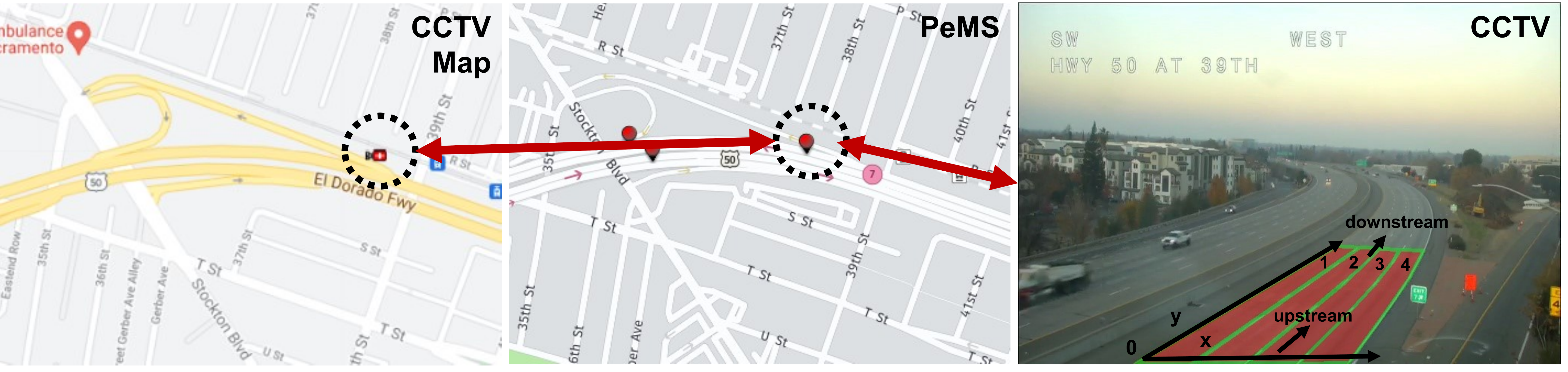}
    \caption{The location of the matched monitoring camera and double-loop detector in Caltrans.}
    \label{fig:case_study_cal_location}
\end{figure}

A 26-hour video is downloaded, covering the period from 2:30 AM on December 6 to 5:00 AM on December 7, 2020. Similar to the procedures for \texttt{HK}, key points on vehicles are annotated manually for camera calibration. The ground true density data are obtained from a double-loop detector at the same location (shown in Figure~\ref{fig:case_study_cal_location} center) within the same time period. The detector data are obtained from the PeMS system, which include the average traffic speed, density, and flow data. Given the study region,  we can also divide the roads into four lanes (numbered along the x-axis), and define vehicle locations along the the y-axis, as shown in Figure~\ref{fig:case_study_cal_location} right.

\subsection{Estimation accuracy}

The accuracy of the estimated traffic density is shown in Figure~\ref{fig:traffic_density_cal}. It can be seen that the traffic density  in \texttt{Sac} is much lower than that in \texttt{HK}, meaning that the congestion is less frequent in \texttt{Sac}. As the video is recorded on Sunday when most people do not go to work, there is only one peak of the traffic density during the 24-hour period. From Table~\ref{tab:Sacramento_density},
\begin{figure}[h]
    \centering
    \includegraphics[width=1\textwidth]{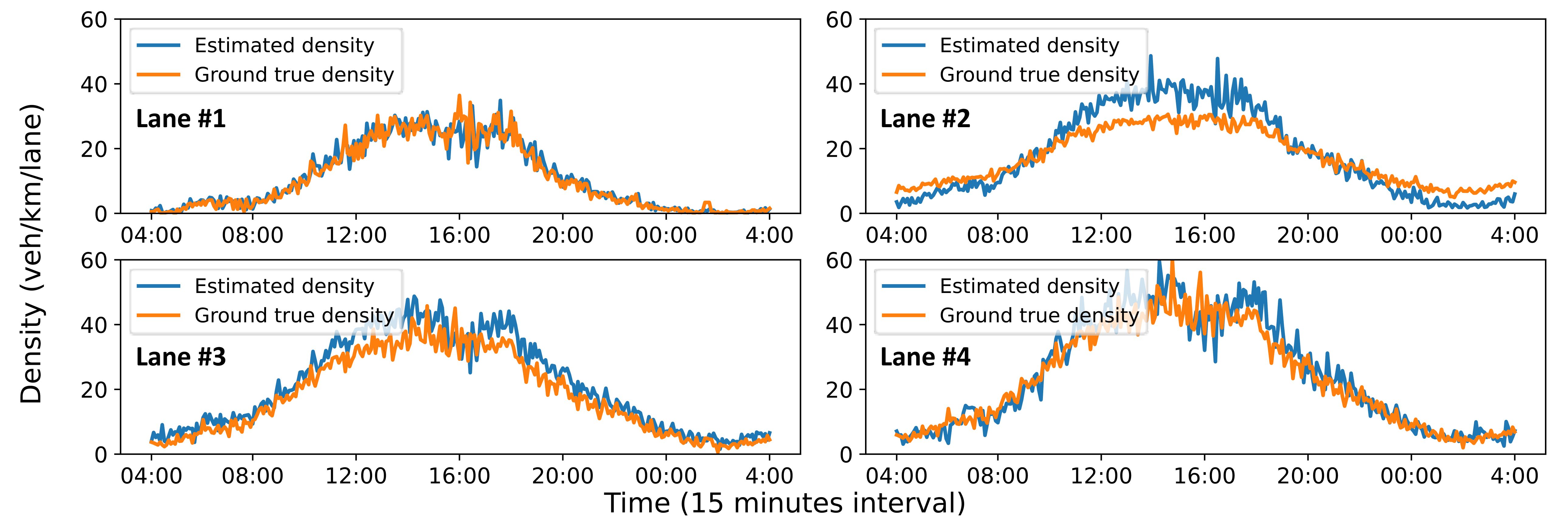}
    \caption{Traffic density from 4:00 AM to 4:00 AM on the next day in \texttt{Sac}.}
    \label{fig:traffic_density_cal}
\end{figure}
\begin{table}[h]
    \begin{tabular}{p{0.20\columnwidth}|p{0.23\columnwidth}p{0.23\columnwidth}p{0.23\columnwidth}}
    \hline
    \textbf{Lane ID} & \textbf{RMSE} & \textbf{MAE} & \textbf{MAPE} \\
    \hline
    Lane \#1 & 1.13 & 0.73 & 34.54\% \\
    Lane \#2 & 2.14 & 1.69 & 28.45\% \\
    Lane \#3 & 1.83 & 1.43 & 30.02\% \\
    Lane \#4 & 1.87 & 1.37 & 18.46\% \\
    Average  & 1.33 & 1.30 & 27.87\% \\
    \hline
    \end{tabular}
    \caption{The RMSE, MAE and MAPE of the estimated density in different lanes from monitoring cameras in \texttt{Sac} (unit for RMSE, MAE: veh/km/lane)}
    \label{tab:Sacramento_density}
\end{table}
it can been seen that the estimated density in \texttt{Sac} is close to the ground truth. The MAE for Lane \#1 is only approximately 0.73 veh/km/lane, while those for the other three lanes are approximately 1.50 veh/km/lane. The RMSE and MAPE of all lanes are acceptable, meaning that the method can accurately capture the variation in traffic density. 

\subsection{Spatio-temporal patterns of the density}

The spatio-temporal distribution of the density is shown in Figure~\ref{fig:st_Sacramento}.  The same as the heatmap in \texttt{HK}, where a lighter color represents a higher density. One can see that the traffic density starts to grow at 10:00 AM, and the peak period is approximately from 2:00 PM to 5:00 PM at local time. Along the road, the traffic density is uniformly distributed between downstream and upstream sections of the road. 
\begin{figure}[h]
    \centering
    \includegraphics[width=1\textwidth]{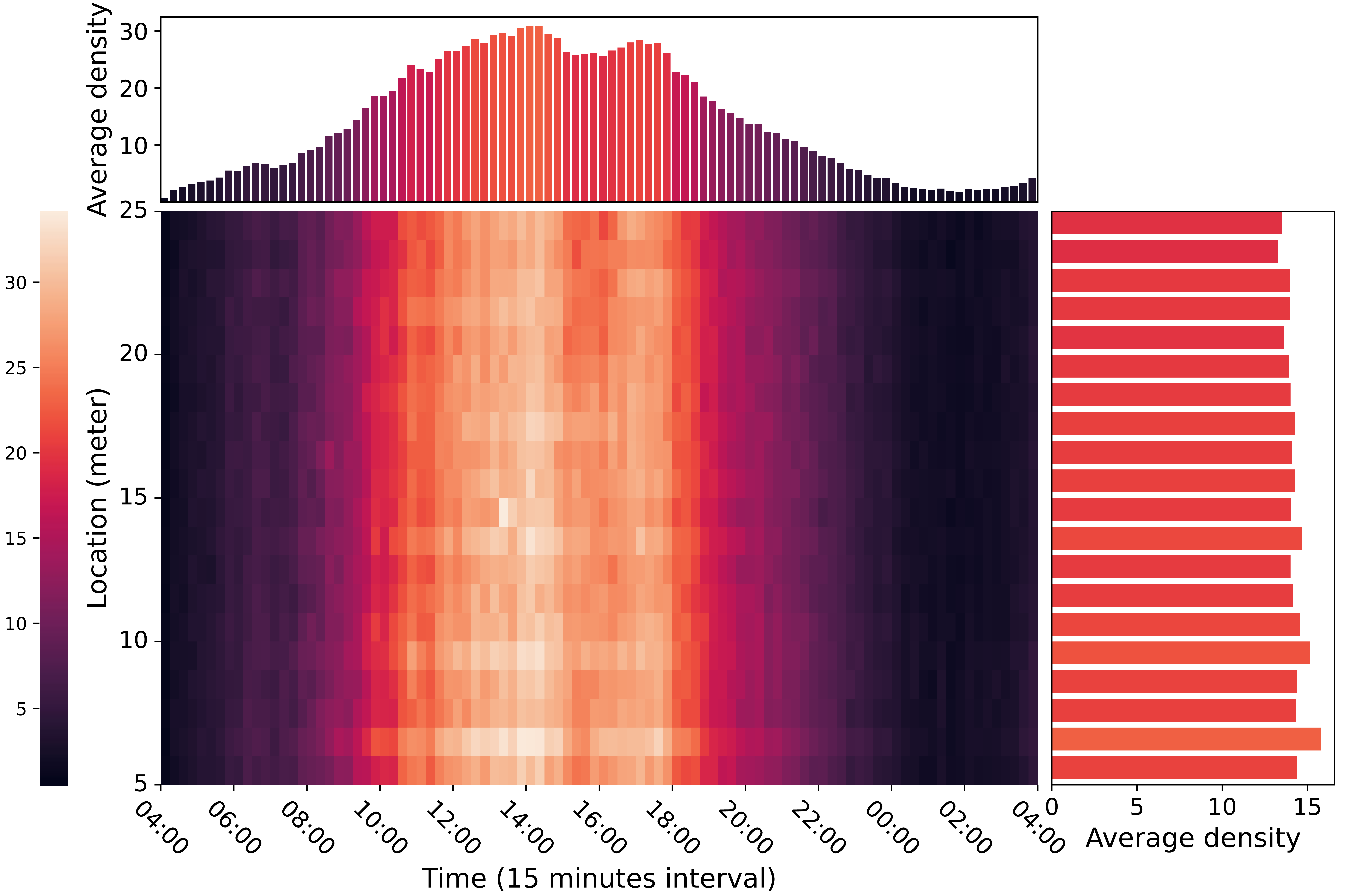}
    \caption{The spatio-temporal distribution of traffic density on an averaged day (unit:veh/km/lane) in \texttt{Sac}.}
    \label{fig:st_Sacramento}
\end{figure}

\begin{figure}[h]
    \centering
    \includegraphics[width=1\textwidth]{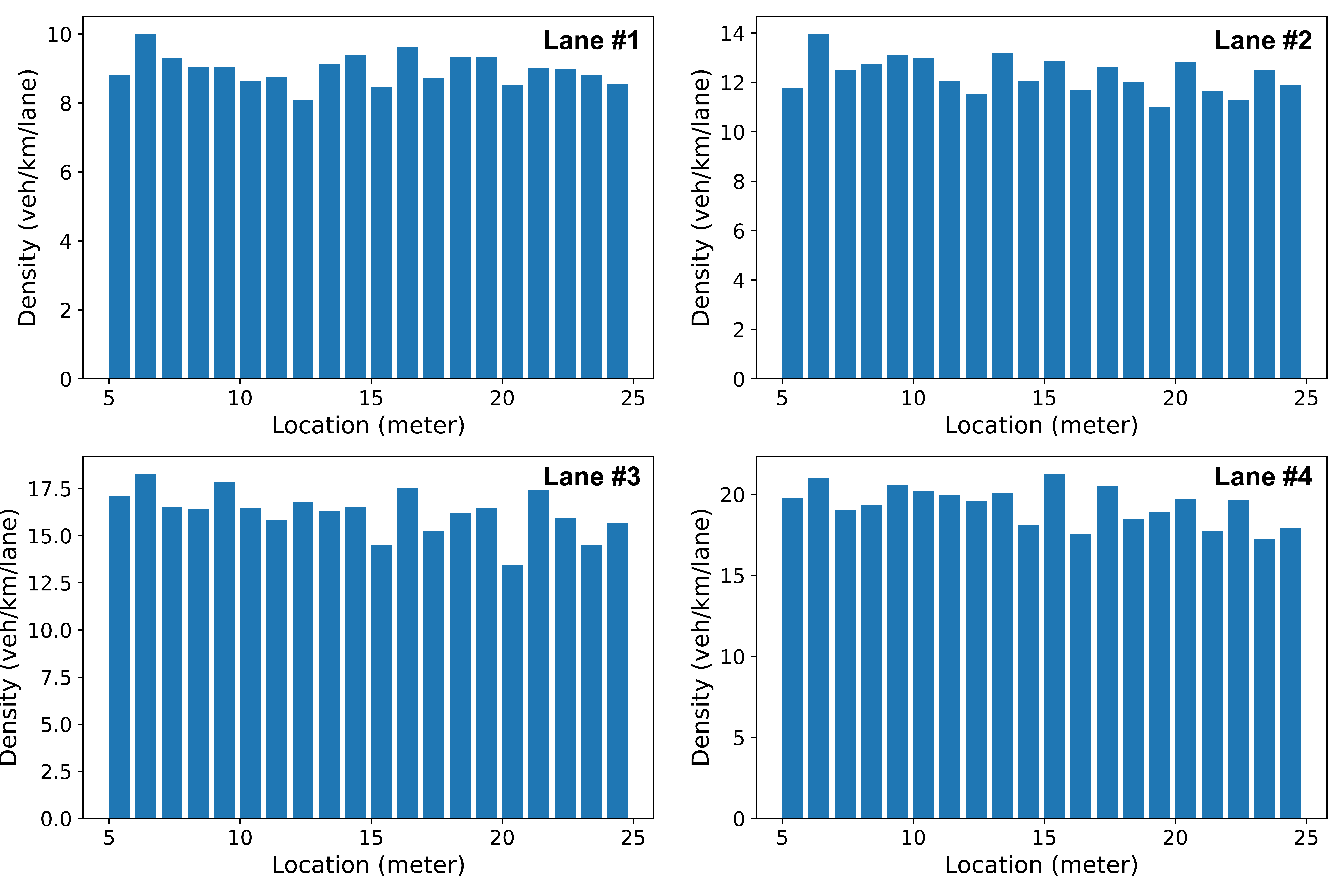}
    \caption{Average density along the road for each lane in \texttt{Sac}.}
    \label{fig:location_hist_Sacramento}
\end{figure}
Spatial distribution of density along the road for each lane is shown in Figure~\ref{fig:location_hist_Sacramento}. One can see that vehicles are uniformly distributed on each lane, and the average densities are different. Overall, the variation in the density in \texttt{Sac} is much smaller than that in \texttt{HK}, and the corresponding accuracy in density estimation is also better in \texttt{Sac}. It is reasonable to conjecture that the congested and dynamic road conditions can deteriorate the accuracy of the density estimation. 

\subsection{Calibration of the fundamental diagrams}

Incorporating the traffic speed data for each lane from the loop detectors in the PeMS, we also calibrate the fundamental diagram for each lane. Owing to the limited number of data points, calibration with the Newell model \citep{newell} is not practically feasible. Instead, we use the Greenshields model \citep{greenshield} to fit the relationship between density-speed and density-flow. Figure~\ref{fig:fd_Sacramento} shows the relationship between density-flow and density-speed respectively.
\begin{figure}[h]
    \centering
    \includegraphics[width=1\textwidth]{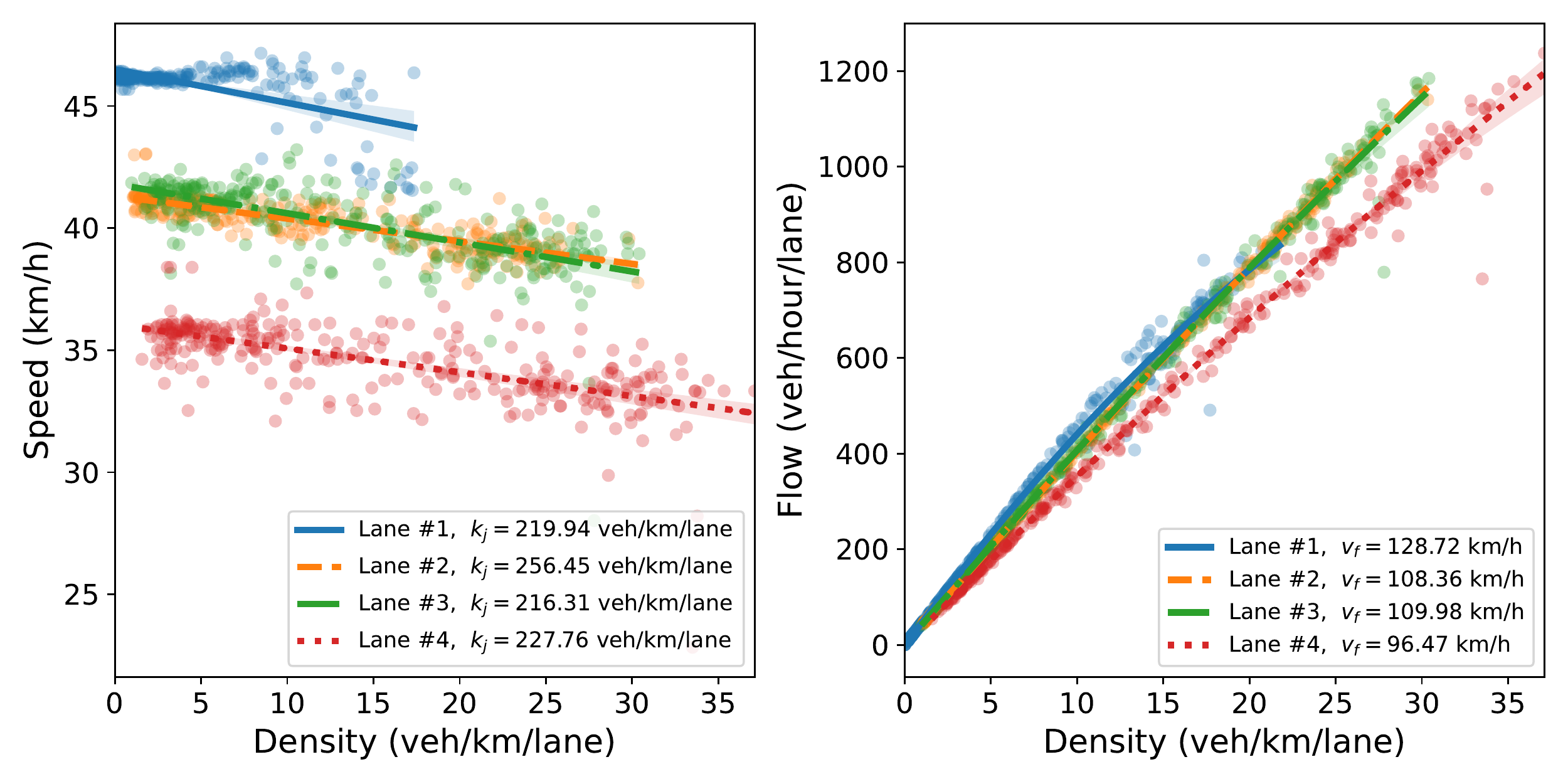}
    \caption{The calibrated fundamental diagrams for each lane in \texttt{Sac}.}
    \label{fig:fd_Sacramento}
\end{figure}
Since the congestion is lighter on Sunday, vehicles are driven close to the free flow speed at most times. Hence the $k_j$ might be overestimated. It can be seen that Lane \#1 possesses the highest free flow speed indicating that vehicles usually drive fastest on the leftmost lane. Lanes \#2 and \#3 share similar free flow speeds, while the free flow speed on Lane \#4 is smaller. A probable reason is that Lane \#4 is connected to an off-ramp to local roads, and vehicles are prone to decelerate when it approaches the off-ramp. 

\clearpage

\section{Conclusions}
\label{section:conclusion}
In this paper, we propose a holistic framework for traffic density estimation using traffic monitoring cameras with 4L characteristics, and the {\bf4L} represents {\bf L}ow frame rate, {\bf L}ow resolution, {\bf L}ack of annotated data, and {\bf L}ocated in complex road environments. The proposed density estimation framework consists of two major components: camera calibration and vehicle detection. For camera calibration, a multi-vehicle calibration method named \texttt{MVCalib} is developed to estimate the actual length of roads from camera images. For vehicle detection, the transfer learning scheme is adopted to fine-tune the deep-learning-based model parameters. A linear-program-based data mixing strategy that incorporating multiple datasets is proposed to synergize the performance of the vehicle detection model. 

The developed camera calibration and vehicle detection models are compared with existing baseline models in terms of the performance on real world monitoring camera data in Hong Kong and Sacramento, and both models outperform the existing the state-of-the-art models. The MAE of camera calibration is less than 0.2 meters out of 6 meters, and the accuracy of the detection model is approximately 90\%. We further conduct two case studies in Hong Kong and Sacramento to evaluate the quality of the estimated density. The experimental results indicate that the MAE for the estimated density is 9.04 veh/lane/km in Hong Kong and 1.30 veh/lane/km in Sacramento. Comparing the estimation  results in the two study regions, we also observe that the performance of the proposed density estimation framework degrades under congested traffic conditions.

By demonstrating the effectiveness of the proposed density estimation framework in two study regions in different countries, we validate the framework's potential for large-scale traffic density estimation from monitoring cameras in cities across the globe, and the proposed framework could provide considerable and fruitful information for traffic operations and management applications without upgrading the hardware.

In the future research, we would like to extend the proposed framework to estimate other traffic state variables such as speed, flow, and occupancy. In the camera calibration method, the key points of each vehicle are manually labeled, which can be further automated \citep{Bhardwaj2018autocalib}. In addition to the vehicle detection model, a vehicle classification model could also be developed to estimate the traffic density by vehicle type \citep{car_class}. It would be interesting to explore the domain adaptation approach to detect vehicles in various traffic scenarios \citep{veh_detect_da_1,veh_detect_da_2}. Moreover, it would be of practical value to develop a fully automated and end-to-end pipeline to deploy the proposed density estimation framework in different traffic surveillance systems.

\section*{Supplementary Materials}
The source codes for the developed camera calibration models as well as the trained vehicle detect model in the proposed traffic density estimation framework can be found at GitHub\footnote{\url{https://github.com/ZijianHu/Traffic_Density_Estimation}}.

\section*{Acknowledgments}
The work described in this study was supported by grants from the Research Grants Council of the Hong Kong Special Administrative Region, China (Project No. PolyU R5029-18 and R7027-18) and a grant from the Research Institute for Sustainable Urban Development (RISUD) at the Hong Kong Polytechnic University (Project No. P0038288). The contents of this paper reflect the views of the authors, who are responsible for the facts and the accuracy of the information presented herein. 

\bibliography{citation}

\clearpage
\appendix

\section{Notations}
\label{apx:notations}
\begin{longtable}{p{0.15\columnwidth}|p{0.80\columnwidth}}
\hline
\textbf{Variables} & \textbf{Definitions}  \\
\hline
\multicolumn{2}{l}{\textbf{Traffic-related Variables}} \\
\hline
$k$ & Traffic density. \\
$k_j$ & Jam density. \\
$L$ & Length of the road in the study region. \\
$N$ & Number of vehicles in the study region. \\
$\overline{v}_f$ & Free-flow speed. \\
\hline
\multicolumn{2}{l}{\textbf{Camera Calibration}} \\
\hline
$\hat{C}_i$ & The centroid of all the back-projected key points on the $i$th vehicle. \\ 
$\hat{f}$ & Default focal length. \\
$f^i$ & The focal length estimated with the $i$th vehicle in the vehicle model matching stage. \\
$f_j^i$ & The focal length estimated with the $i$th vehicle under the $j$th model in the candidate generation stage. \\
$h$ & Height of the camera image. \\
$K$ & The matrix of endogenous parameters of the traffic monitoring camera. \\
$m$ & Number of potential vehicle models. \\
$M_i$ & Number of key points on the $i$th vehicle. \\
$n$ & Number of vehicle in the traffic monitoring images. \\
$\mathcal{p}^i$ & The set of two-dimensional key points of the $i$th vehicle. \\
$\mathcal{P}_j^i$ & The sets of three-dimensional key points of the $i$th vehicle in real world presumed that the vehicle model is $j$. \\
$p_k^i$ & The coordinate of the $k$th two-dimensional key points on the $i$th vehicle.\\
$P_{j,k}^i$ & The coordinates of the $k$th two-dimensional key points on the $i$th vehicle given the vehicle model of $j$. \\
$R^i$ & The rotation matrix estimated with the $i$th vehicle in the vehicle model matching stage. \\
$R_j^i$ & The rotation matrix estimated with the $i$th vehicle under the $j$th model in the candidate generation stage. \\
$R_j^i\Big\vert_k$ & The $k$th column of the rotation matrix estimated with the $i$th vehicle under the $j$th model in the candidate generation stage. \\
$T^i$ & The translation vector estimated with the $i$th vehicle in the vehicle model matching stage. \\
$T^{i'}$ & The translation vector estimated with the anchor vehicle $i'$ in the vehicle model matching stage. \\
$T_j^i$ & The translation vector estimated with the $i$th vehicle under the $j$th model in the candidate generation stage. \\
$T_j^i \Big\vert_k$ & The $k$th column of the translation vector estimated with the $i$th vehicle under the $j$th model in the candidate generation stage.\\
$\alpha$ & A hyper-parameter adjusting the weight distance loss and angle loss. \\
$w$ & The width of the camera image. \\
$\psi^i$ & The estimated parameters including $\{f^i, R^i, T^i\}$ of the $i$th vehicle in the vehicle model matching stage \\
$\psi_j^i$ & The estimated parameters including $\{f^i_j, R^i_j, T^i_j \}$ of the $i$th vehicle given the $j$th model in the candidate generation stage. \\
$\tau$ & The hyper-parameter of controlling the distribution of the weighting function. \\
$\omega(\hat{C}_i, \hat{C}_{i'} \vert \tau)$ & The weighting function for the $i$th vehicle using the anchor vehicle $i'$. \\
$\xi_l(i, k_1, k_2, \psi^{i'})$ & The distance loss of the $k_1$th and $k_2$th key points on the $i$th vehicle given the parameters estimated from anchor vehicle $i'$. \\
$\xi_r(i, k_1, k_2, \psi^{i'})$ & The angle loss of the $k_1$th and $k_2$th key points on the $i$th vehicle given the parameters estimated from anchor vehicle $i'$. \\
$L_f(\psi^{i'} \vert \alpha, \tau)$ & The weighted objective function for optimizing the parameters from anchor vehicle $i'$ in the parameter fine-tuning stage. \\
$L_p(i, \psi^{i'} \vert \alpha)$ & The back-projection loss of the $i$th vehicle given the parameter from anchor vehicle $i'$ from the image to the real-world including distance loss and angle loss.\\
$L_v(\psi_j^i)$ & The projection loss from the real world to image given the parameters estimated from the $i$th vehicle given the model of $j$. \\
$\hat{P}_k^i(\psi^{i'})$ & The back-projected point on the $i$th vehicle of the $k$th key point under the camera parameter of the anchor vehicle $i'$. \\
\hline
\multicolumn{2}{l}{\textbf{Vehicle Detection}} \\
\hline
$u$ & Number of datasets. \\
$v$ & Number of traffic scenarios. \\
$q_{\mu, \nu}$ & Number of images that will be incorporated in the LP hybrid dataset from dataset $\mu$ for the scenarios $\nu$. \\
$Q_{\mu, \nu}$ & Number of images in dataset $\mu$ for the scenario $\nu$. \\
$\beta$ & The maximum tolerance for the upper and lower bound of image number in different traffic scenarios. \\
$\gamma$ & The maximum tolerance balancing the image number from different datasets. \\ \hline 
\end{longtable}

\section{The Interpretation of Metrics of Vehicle Detection}
\label{apx:explain_metrics}
The metrics for evaluating the accuracy of vehicle detection models include precision, recall, PR-curve, mAP@0.5, and mAP@0.5:0.95. These metrics are commonly used to evaluate the quality of object detection models in CV. Before introducing the concept of the above metrics, there is a prerequisite metric called intersection over union (IoU), which defines the gaps between the estimated objection location and the ground truth. The outputs of the detection model are two-fold. One is four corner coordinates that locates the object position in the image. The other is the confidence probability of the belonging category. If we overlap the estimated and the ground true bounding boxes, there will be an area of intersection (shown in Figure~\ref{fig:aoi}) and an area of union (shown in Figure~\ref{fig:aou}), where the red and green rectangle means the estimated and ground true bounding boxes of an object, and the blue rectangle shows the intersection and union area, respectively. The intersection over union is defined as the quotient of the intersection area over the union area.

\begin{figure}[h]
    \centering 
    \begin{subfigure}[b]{0.48\textwidth}
        \includegraphics[width=\textwidth]{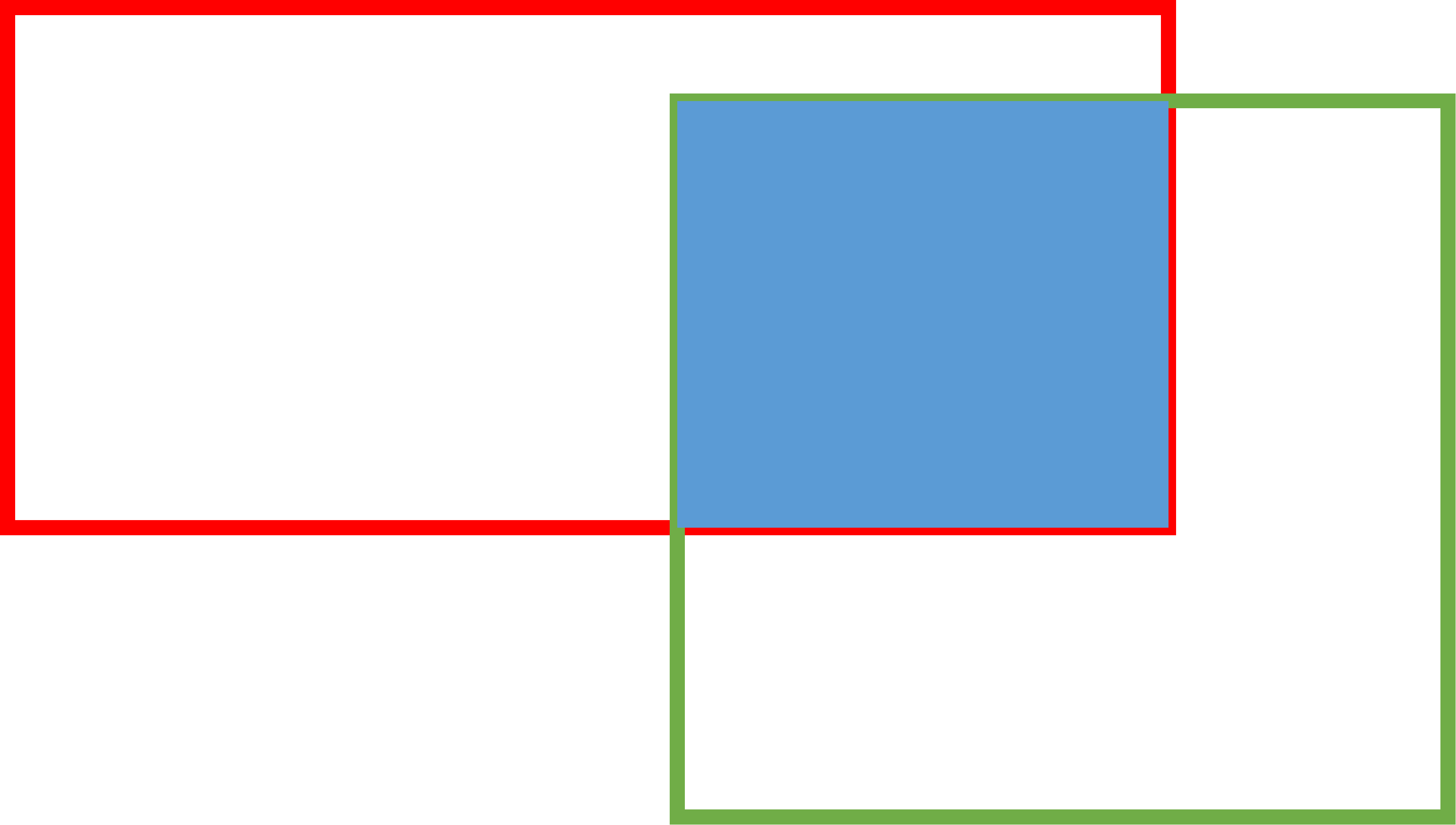}
        \caption{Area of intersecion.}
        \label{fig:aoi}
    \end{subfigure}
    \begin{subfigure}[b]{0.48\textwidth}
        \includegraphics[width=\textwidth]{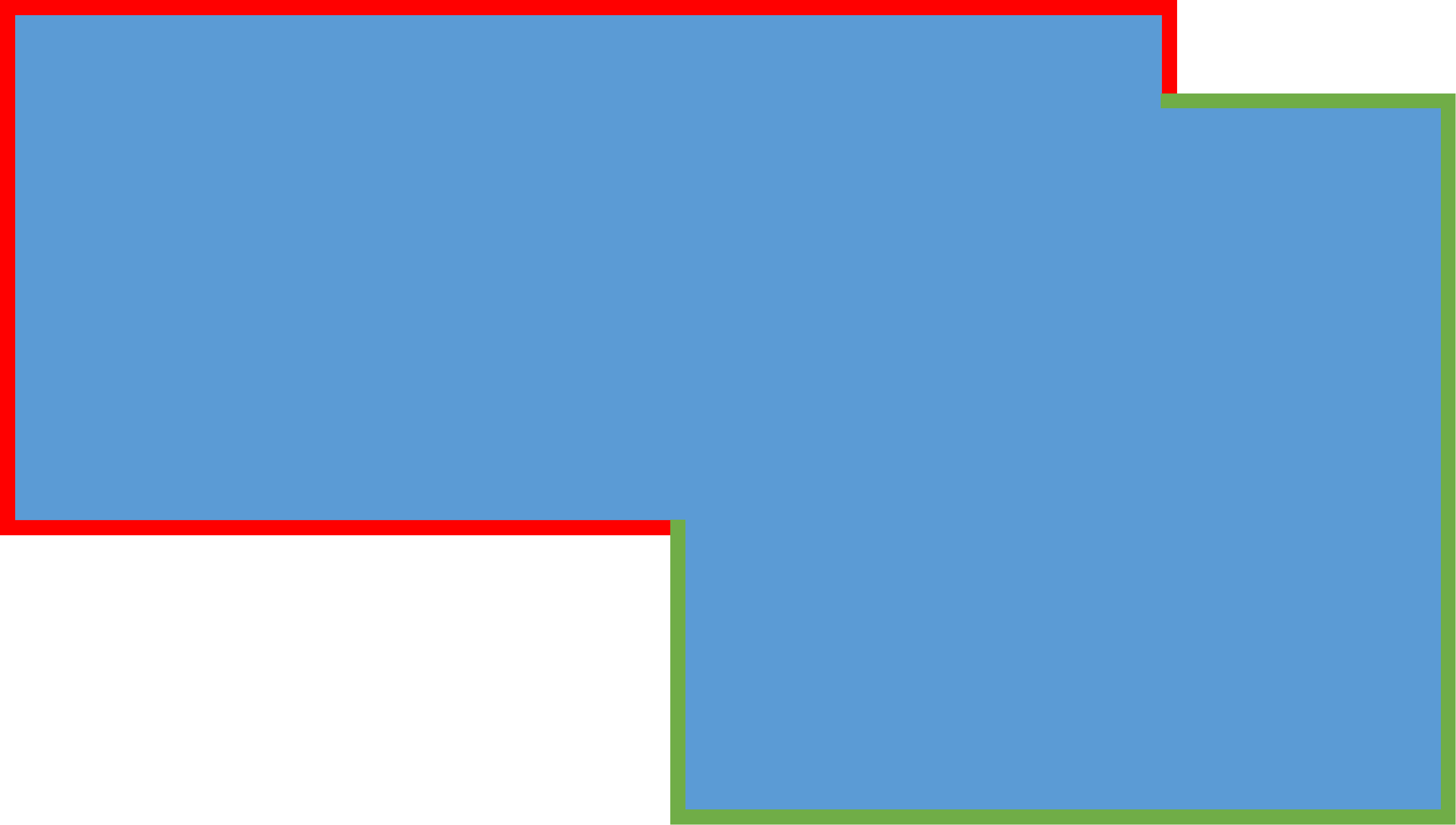}
        \caption{Area of union.}
        \label{fig:aou}
    \end{subfigure}
    \caption{Illstruation of intersecion over union (IoU).} 
    \label{fig:IoU} 
\end{figure}

A threshold for IoU is set to decide if the bounding box is real or fake. If the IoU exceeds the threshold, we label it as True Positive (TP). Moreover, we can divide all circumstances into three categories, True Positive (TP), False Positive (FP), and False Negative (FN). The illustration about these circumstances is shown in Table~\ref{tab:TP_FP_FN}. 
\begin{table}[h]
    \begin{tabular}{l|l}
    \hline
    \textbf{Categories} & \textbf{Comments} \\
    \hline
    True Positive (TP)  & The IoU between predicted and ground truth exceeds the threshold. \\
    \hline
    False Positive (FP) & \begin{tabular}[c]{@{}l@{}}1. The IoU between predicted and ground truth is smaller than the threshold.\\ 2. Estimated bounding boxes not overlapping with any ground true bounding boxes.\end{tabular} \\
    \hline
    False Negative (FN) & The object is not detected by the algorithm. \\
    \hline
    \end{tabular}
    \caption{Illustration of TP, FP, and FN.}
    \label{tab:TP_FP_FN}
\end{table}

Additionally, the precision and recall can be calculated as

\begin{equation}
    \begin{aligned}
        Precision &= \frac{TP}{TP + FP} \\
        Recall &= \frac{TP}{TP + FN}
    \end{aligned},
\end{equation}

The precision and recall are a a pair of contradictory metric. When the precision is high, the recall is relative low, vice versa. If we rank all the detection results according to the confidence probability, set different thresholds for confidence probability and re-calculate the precision and recall, a precision-recall (PR) curve can be plotted where the x-axis is the recall and the y-axis is the precision. An example is shown in Figure~\ref{fig:PR_curve_HK}. 
The Precision-Recall curves of different vehicle detection models are shown in Figure~\ref{fig:PR_curve_HK}. 
\begin{figure}[h]
    \centering
    \begin{subfigure}[b]{0.32\textwidth}
        \includegraphics[width=\textwidth]{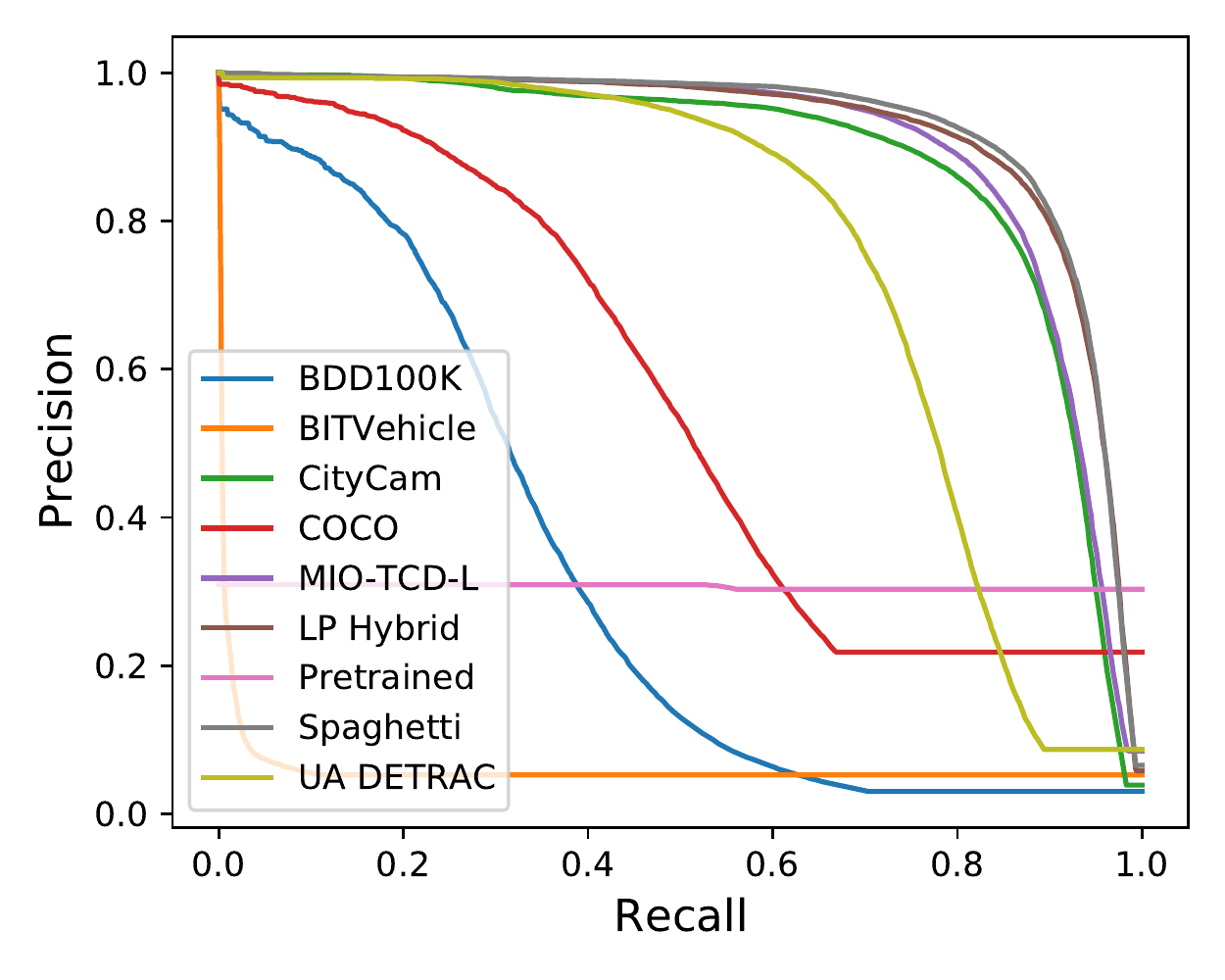}
        \caption{The PR curve for images at daytime.}
        \label{fig:PR_curve_day}
    \end{subfigure}
    \begin{subfigure}[b]{0.32\textwidth}
        \includegraphics[width=\textwidth]{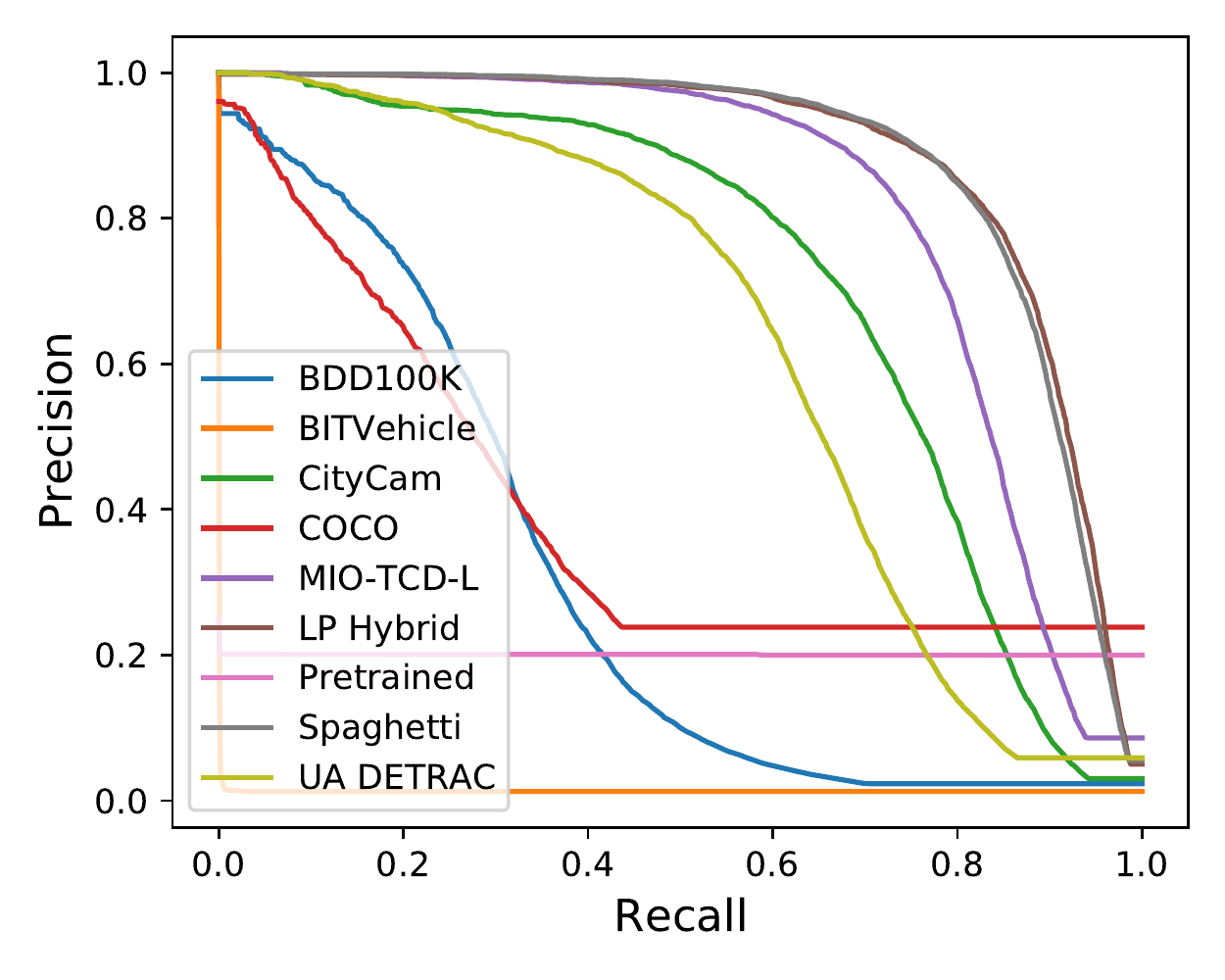}
        \caption{The PR curve for images at nighttime.}
        \label{fig:PR_curve_night}
    \end{subfigure}
    \begin{subfigure}[b]{0.32\textwidth}
        \includegraphics[width=\textwidth]{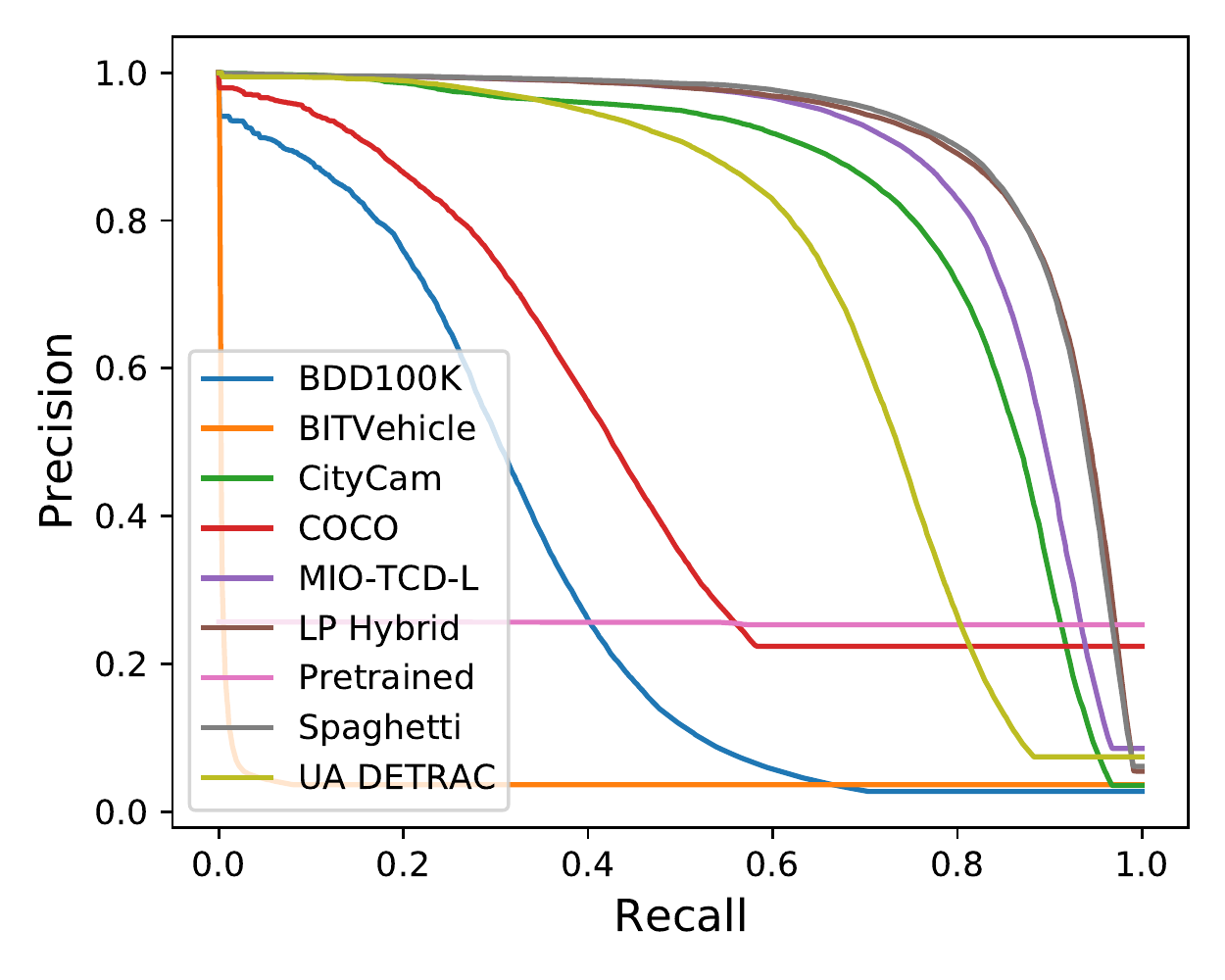}
        \caption{The PR curve for images throughout day and night.}
        \label{fig:PR_curve_full}
    \end{subfigure}
    \caption{The PR curve for camera images on the testing set.}
    \label{fig:PR_curve_HK}
\end{figure}
With the increasing of confidence threshold, the recall enlarges while the precision reduces. If the curve is close to the upper right corner of the figure, the performance of the model is good. Hence, it can be seen that the detection models trained with Spaghetti and LP hybrid datasets outperform other models trained with sole datasets. In particular, if we compare the curve between models trained with Spaghetti and LP hybrid datasets, the differences between these two models are marginal.

The Average Precision (AP) is the area that below the PR curve, calculated as

\begin{equation}
    AP = \int_0^1 PR(r)dr,
\end{equation}
\noindent where $r$ is the recall and $PR(r)$ is the precision. The mAP@0.5 means the AP value when the IoU threshold is 0.5. Besides, the mAP@0.5:0.95 means the average of AP when the IoU threshold equals to $0.5, 0.55, 0.9, \cdots, 0.9, 0.95$ separately. These two metrics are extensive used to evaluate the performance of algorithms in object detection tasks in CV.

\section{Proof of Proposition~\ref{prop:reduce}}
\label{ap:reduce}
Given a unit vector of the rotation axis $\bm{d} \in \mathbb{R}^{3}$, and a rotation process along the rotation axis with a clockwise degree of $\theta$ exist in the three-dimensional space, the rotation matrix can be acquired from the Rodrigues' formula \citep{Gao2017SLAM} as :

\begin{equation}
    R = \cos \theta \bm{I} + \left( 1 - \cos \theta \right) \bm{d} \bm{d}^T + \sin \theta \bm{d} \times, \label{eq:rodrigures}
\end{equation}
where $\bm{I}$ is the identity matrix and 
$\bm{d} \times = \begin{bmatrix} d_1 \\ d_2 \\ d_3 \end{bmatrix} \times = 
\begin{bmatrix} 
    0 & -d_3 & d_2 \\
    d_3 & 0 & -d_1 \\
    -d_2 & d_1 & 0 
\end{bmatrix}$
represents the antisymmetric matrix of $\bm{d}$. 

Taking the trace from the both sides of Equation \ref{eq:rodrigures}:

\begin{equation}
    \begin{aligned}
        \tr \left( R \right) &= \cos \theta \tr \left( \bm{I} \right) + 
        \left( 1 - \cos \theta \right) \tr \left( \bm{d}\bm{d}^T \right) +
        \sin \theta \tr \left( \bm{n} \times \right) \\
        &= 3 \cos \theta + (1 - \cos \theta) \\
        &= 1 + 2 \cos \theta
    \end{aligned},
\end{equation}

\noindent $\theta$ can be solved as:

\begin{equation}
    \theta = \arccos \frac{\tr \left( R \right) - 1}{2}.
\end{equation}

For rotation axis $\bm{d}$, since the location of point on the rotation axis will not change after rotation, this can be solved as follows:

\begin{equation}
    R\bm{d} = \bm{d} \label{rot_invar},
\end{equation}
\noindent Hence the rotation can be compressed into a vector with three elements.
\begin{equation}
    \begin{aligned}
        R \Leftrightarrow \Theta &= \theta \bm{d} \\
        \Theta & \in \mathbb{R}^{3}
    \end{aligned}
\end{equation}
\noindent Therefore, the new parameter space is $f, \Theta, T \in \mathbb{R}^{7}$.
\end{document}